\newif\ifapx
\newcommand{\otoprule }{\midrule[\heavyrulewidth]}
\newcommand{\ourmaintitle}{Universal Dependency Analysis}
\newcommand{\ourtitle}{\ourmaintitle}
\newcommand{\oururl}{\url{http://eda.mmci.uni-saarland.de/uds/}}
\newcommand{\codeurl}{\oururl}
\newcommand{\uds}{\textsc{uds}\xspace}
\newcommand{\udsr}{\textsc{uds}$_{\neg r}$\xspace}
\newcommand{\cmi}{\textsc{cmi}\xspace}
\newcommand{\mac}{\textsc{mac}\xspace}
\newcommand{\hics}{\textsc{hics}\xspace}
\newcommand{\qr}{\textsc{qr}\xspace}
\newcommand{\ce}{\textsc{ce}\xspace}
\newcommand{\dbscan}{\textsc{dbscan}\xspace}
\newcommand{\lof}{\textsc{lof}\xspace}
\newcommand{\D}{\mathbf{D}}
\newcommand{\Dsc}{\mathcal{G}}
\newcommand{\Xb}{\mathbf{X}}
\newcommand{\X}{X}
\newcommand{\pr}{\mathit{pr}}
\newcommand{\pref}{\mathit{f}}
\newcommand{\supp}{\mathit{s}}
\newcommand{\val}{\mathit{val}}
\newcommand{\pos}{\mathit{pos}}
\newcommand{\corr}{\mathit{corr}}
\newcommand{\dsc}{\mathit{dsc}}
\newcommand{\size}{m}
\newcommand{\dima}{n}
\newcommand{\diff}{\mathit{div}}
\newcommand{\dom}{\mathit{dom}}
\newcommand{\maxv}{\mathit{max}}
\newcommand{\minv}{\mathit{min}}
\newcommand{\proofApx}{
\begin{proof}
\ifapx 
 We postpone the proof to Appendix~\ref{sec:proofs}.
\else 
 We postpone the proof to the online Appendix.
\fi 
\end{proof}
}
\renewcommand*{\@fnsymbol}[1]{\ensuremath{\ifcase#1\or   \circ\or \bullet\or *\or \ddagger\or
   \mathsection\or \mathparagraph\or \|\or **\or \dagger\dagger
   \or \ddagger\ddagger \else\@ctrerr\fi}}
\tikzstyle{block} = [rounded corners, draw=blue!70, fill=white, text width=3.3cm, minimum height=4em]
\tikzstyle{bgblock} = [rounded corners, draw=blue!70, thick, fill=blue!10, text width=3.3cm, minimum height=4em]
\tikzstyle{line} = [draw, -latex', thick,blue!70]
\definecolor{yafaxiscolor}{rgb}{0.3, 0.3, 0.3}
\definecolor{yafcolor1}{rgb}{0.4, 0.165, 0.553}
\definecolor{yafcolor2}{rgb}{0.949, 0.482, 0.216}
\definecolor{yafcolor3}{rgb}{0.47, 0.549, 0.306}
\definecolor{yafcolor4}{rgb}{0.925, 0.165, 0.224}
\definecolor{yafcolor5}{rgb}{0.141, 0.345, 0.643}
\definecolor{yafcolor6}{rgb}{0.965, 0.933, 0.267}
\definecolor{yafcolor7}{rgb}{0.627, 0.118, 0.165}
\definecolor{yafcolor8}{rgb}{0.878, 0.475, 0.686}
\newlength{\yafaxispad}
\newlength{\yaftlpad}
\newlength{\yaflabelpad}
\newlength{\yafaxiswidth}
\newlength{\yafticklen}
\def\pgfplots@drawtickgridlines@INSTALLCLIP@onorientedsurf#1{}
\newcommand{\yafdrawaxes}[4]{
	\pgfplotstransformcoordinatex{#1}\let\xmincoord=\pgfmathresult 
	\pgfplotstransformcoordinatex{#2}\let\xmaxcoord=\pgfmathresult 
	\pgfplotstransformcoordinatey{#3}\let\ymincoord=\pgfmathresult 
	\pgfplotstransformcoordinatey{#4}\let\ymaxcoord=\pgfmathresult 
	\pgfsetlinewidth{\yafaxiswidth} 
	\pgfsetcolor{yafaxiscolor}
	\pgfpathmoveto{\pgfpointadd{\pgfpointadd{\pgfplotspointrelaxisxy{0}{0}}{\pgfqpointxy{\xmincoord}{0}}}{\pgfqpoint{-0.5\yafaxiswidth}{\yafaxispad}}}
	\pgfpathlineto{\pgfpointadd{\pgfpointadd{\pgfplotspointrelaxisxy{0}{0}}{\pgfqpointxy{\xmaxcoord}{0}}}{\pgfqpoint{0.5\yafaxiswidth}{\yafaxispad}}}
	\pgfpathmoveto{\pgfpointadd{\pgfpointadd{\pgfplotspointrelaxisxy{0}{0}}{\pgfqpointxy{0}{\ymincoord}}}{\pgfqpoint{\yafaxispad}{-0.5\yafaxiswidth}}}
	\pgfpathlineto{\pgfpointadd{\pgfpointadd{\pgfplotspointrelaxisxy{0}{0}}{\pgfqpointxy{0}{\ymaxcoord}}}{\pgfqpoint{\yafaxispad}{0.5\yafaxiswidth}}}
	\pgfusepath{stroke}
}
\newcommand{\yafdrawYaxis}[2]{
	\pgfplotstransformcoordinatey{#1}\let\ymincoord=\pgfmathresult 
	\pgfplotstransformcoordinatey{#2}\let\ymaxcoord=\pgfmathresult 
	\pgfsetlinewidth{\yafaxiswidth} 
	\pgfsetcolor{yafaxiscolor}
	\pgfpathmoveto{\pgfpointadd{\pgfpointadd{\pgfplotspointrelaxisxy{0}{0}}{\pgfqpointxy{0}{\ymincoord}}}{\pgfqpoint{\yafaxispad}{-0.5\yafaxiswidth}}}
	\pgfpathlineto{\pgfpointadd{\pgfpointadd{\pgfplotspointrelaxisxy{0}{0}}{\pgfqpointxy{0}{\ymaxcoord}}}{\pgfqpoint{\yafaxispad}{0.5\yafaxiswidth}}}
	\pgfusepath{stroke}
}
\newcommand{\yafdrawaxisLimits}[4]{
	\pgfplotstransformcoordinatex{#1}\let\xmincoord=\pgfmathresult
	\pgfplotstransformcoordinatex{#2}\let\xmaxcoord=\pgfmathresult
	\pgfplotstransformcoordinatey{#3}\let\ymincoord=\pgfmathresult 
	\pgfplotstransformcoordinatey{#4}\let\ymaxcoord=\pgfmathresult 
	\pgfsetlinewidth{\yafaxiswidth} 
	\pgfsetcolor{yafaxiscolor}
	\pgfpathmoveto{
		\pgfpointadd{
			\pgfpointadd{
				\pgfplotspointrelaxisxy{0}{0}}{
				\pgfqpointxy{\xmincoord}{0}
			}
		}{
			\pgfqpoint{
				-0.5\yafaxiswidth}{
				\yafaxispad
			}
		}
	}
	\pgfpathlineto{
		\pgfpointadd{
			\pgfpointadd{
				\pgfplotspointrelaxisxy{0}{0}
			}{
				\pgfqpointxy{
					\xmaxcoord
				}{0}
			}
		}{
			\pgfqpoint{
				25.5\yafaxiswidth
			}{
				\yafaxispad
			}
		}
	}
	\pgfpathmoveto{\pgfpointadd{\pgfpointadd{\pgfplotspointrelaxisxy{0}{0}}{\pgfqpointxy{0}{\ymincoord}}}{\pgfqpoint{\yafaxispad}{-0.5\yafaxiswidth}}}
	\pgfpathlineto{\pgfpointadd{\pgfpointadd{\pgfplotspointrelaxisxy{0}{0}}{\pgfqpointxy{0}{\ymaxcoord}}}{\pgfqpoint{\yafaxispad}{0.5\yafaxiswidth}}}
	\pgfusepath{stroke}
}
\pgfplotsset{jv ybar/.style={
   ybar, 
   cycle list name=yaf fill,
   xtick = \empty,
   every extra x tick/.style={major tick length=0pt,color=black}, 
   xmajorgrids = false,
}}
\pgfplotsset{jv line/.style={
   no markers,
   cycle list name=yaf,
   log ticks with fixed point,
   y tick label style = {/pgf/number format/set thousands separator = {\,}},
}}
\pgfplotsset{jv line ylog/.style={
   jv line,
}}
\pgfplotsset{axis y line=left, axis x line=bottom,
	tick align=outside,
	tickwidth=\yafticklen,
	clip = false,
    x axis line style= {-, line width = 0pt, color=black!0},
    y axis line style= {-, line width = 0pt, color=black!0},
    x tick style= {line width = \yafaxiswidth, color=yafaxiscolor, yshift = \yafaxispad},
    y tick style= {line width = \yafaxiswidth, color=yafaxiscolor, xshift = \yafaxispad},
    x tick label style = {font=\scriptsize, yshift = \yaftlpad},
    y tick label style = {font=\scriptsize, xshift = \yaftlpad},
    every axis y label/.style = {at = {(ticklabel cs:0.5)}, rotate=90, anchor=center, font=\scriptsize, yshift = -\yaflabelpad},
    every axis x label/.style = {at = {(ticklabel cs:0.5)}, anchor=center, font=\scriptsize, yshift = \yaflabelpad},
    x tick label style = {font=\scriptsize, yshift = 1pt},
    grid = major,
    major grid style  = {dash pattern = on 1pt off 3 pt},
	every axis plot post/.append style= {line width=\yafaxiswidth} ,
	legend cell align = left,
	legend style = {inner sep = 1pt, cells = {font=\scriptsize}},
	legend image code/.code={%
		\draw[mark repeat=2,mark phase=2,#1] 
		plot coordinates { (0cm,0cm) (0.15cm,0cm) (0.3cm,0cm) };%
	} 
}
\begin{document}

\title{\ourtitle}

\author{
Hoang-Vu Nguyen\thanks{Max Planck Institute for Informatics and Saarland University, Germany. Email: \texttt{\{hnguyen,pmandros,jilles\}@mpi-inf.mpg.de}} \hspace{2.0cm}
Panagiotis Mandros\footnotemark[1] \hspace{2.0cm}
Jilles Vreeken\footnotemark[1]
}

\date{}

\maketitle

\begin{abstract}
\small\baselineskip=9pt%
Most data is multi-dimensional. Discovering whether any subset of dimensions, or \emph{subspaces}, of such data is significantly correlated is a core task in data mining. To do so, we require a measure that quantifies how correlated a subspace is. For practical use, such a measure should be \emph{universal} in the sense that it captures correlation in subspaces of any dimensionality and allows to meaningfully compare correlation scores across different subspaces, regardless how many dimensions they have and what specific statistical properties their dimensions possess. Further, it would be nice if the measure can non-parametrically and efficiently capture both linear and non-linear correlations.

In this paper, we propose \uds, a multivariate correlation measure that fulfills all of these desiderata. In short, we define \uds based on cumulative entropy and propose a principled normalization scheme to bring its scores across different subspaces to the same domain, enabling \textit{universal} correlation assessment. \uds is purely non-parametric as we make no assumption on data distributions nor types of correlation. To compute it on empirical data, we introduce an efficient and non-parametric method. Extensive experiments show that \uds outperforms state of the art.

\end{abstract}



\section{Introduction} \label{sec:intro}

Correlation analysis is a key element of data mining. It has applications in many domains, including biology and neuroscience~\cite{ReshefEtAl2011,macke:total}. Traditionally, it focuses on two dimensions. More often than not, however, data is multi-dimensional and can contain multivariate correlations e.g.\ hidden in subsets of dimensions, or \textit{subspaces}~\cite{chanda:assoc}. Identifying such subspaces is an important step towards understanding the data. To do so, we need a \emph{universal} correlation measure.

First of all, the measure should be universal in the sense that it is able to detect correlations in subspaces of any dimensionality. Second, for both usability by the domain experts as well as efficient search, it should allow for universal comparison of scores -- regardless the number of dimensions they were computed over, or the statistical properties of these dimensions. Third, for exploratory analysis the measure should be able to non-parametrically capture both linear and non-linear correlations, making no assumption on data distributions nor types of correlation the data may contain. Fourth, the measure should permit non-parametric and efficient computation on empirical data.

Of the above desiderata, although perhaps most important for practical use, universality is most often overlooked in the literature. For instance, commonly used measures in subspace search such as total correlation~\cite{cheng:enclus}, \cmi~\cite{nguyen:cmi}, and quadratic measure of dependency \qr~\cite{nguyen:4s} in general produce scores that are not comparable across subspaces of different dimensionality. Pearson's correlation, \hics~\cite{keller:hics}, and \mac~\cite{nguyen:mac}, while addressing universality, have other issues. In particular, Pearson's correlation is for pairwise linear correlations. \hics relies on high dimensional conditional distributions and hence is prone to the curse of dimensionality. For each subspace, \mac needs to compute correlation scores of all dimension pairs before outputting the final score -- a potential source of inefficiency.

In this paper, we aim at addressing all desiderata. We do so by proposing \uds, for universal dependency score. In short, we define \uds based on cumulative entropy~\cite{rao:cre,crescenzo:ce} -- a new type of entropy permitting non-parametric computation on empirical data. To address universality, we propose a \textit{principled} normalization scheme to bring correlation scores of \uds across different subspaces to the same domain, enabling universal correlation assessment. Further, \uds is highly suited to non-parametric exploratory analysis as we make no assumption on data distributions nor types of correlation. Lastly, we propose a non-parametric method to reliably and efficiently compute \uds on empirical data. Our method does not require to compute pairwise correlations of the involved dimensions and scales near linearly to the data size. Extensive experiments on both synthetic and real-world data sets show that \uds has high statistical power and performs very well in subspace search.

Next, we present the principles of correlation measure. For readability, we put all proofs of the paper in the appendix.

\section{Correlation Measures -- A Brief Primer} \label{sec:pre}

We consider a multivariate data set $\D$ with $\size$ records and $\dima$ real-valued dimensions $X_1, \ldots, X_\dima$. For each dimension $X_i$, we assume that $\dom(X_i) = [\minv_i, \maxv_i]$. Further, we write $p(X_i)$ as its probability distribution function (pdf) and $P(X_i)$ as its cumulative distribution function (cdf).

Each non-empty subset $S \subset \{X_1, \ldots, X_\dima\}$ constitutes a subspace. To discover correlated subspaces, we need to quantify correlation score $\corr(S)$ of $S$. We will mainly use subspace $\{X_1, \ldots, X_d\}$ where $d \in [1, \dima]$ in our analysis. We write $X_{1, \ldots, i}$ for shorthand of $X_1, \ldots, X_i$ ($i \geq 1$).


In principle, $\corr(X_{1, \ldots, d})$ quantifies to how much the relation of $X_{1, \ldots, d}$ deviates from the statistical independence condition, i.e.\ how much their joint probability distribution differs from the product of their marginal probability distributions~\cite{han:info,dcor}. The larger the difference, the higher $\corr(X_{1, \ldots, d})$ is. Formally, we have
\begin{equation} \label{eq:corr}
\textstyle\corr(X_{1, \ldots, d}) = \diff\left(p(X_{1, \ldots, d}), \prod\limits_{i=1}^{d} p(X_i)\right)
\end{equation}
with $\diff$ being an instantiation of a divergence function. An important property for data analysis is that~\cite{renyi:corr} $\corr(X_{1, \ldots, d})$ is non-negative and zero iff $X_{1, \ldots, d}$ are statistically independent, i.e.\ $p(X_{1, \ldots, d}) = \prod\limits_{i=1}^{d} p(X_i)$.

As Eq.~(\ref{eq:corr}) works with multivariate distribution $p(X_{1, \ldots, d})$, when $d$ is large $\corr(X_{1, \ldots, d})$ is prone to the curse of dimensionality. Recognizing this issue, recent work~\cite{nguyen:cmi,drissi:gcre,wata:info} considers factorizing $p(X_{1, \ldots, d})$ and defines
\begin{equation} \label{eq:corrfact}
\textstyle\corr(X_{1, \ldots, d}) = \sum\limits_{i=2}^{d} \diff\left(p(X_i), p(X_i \mid X_{1, \ldots, i-1})\right).
\end{equation}
To uphold the non-negativity and zero score requirements, it suffices that $\diff(p(X_i), p(X_i \mid \cdot))$ must be non-negative and is zero iff $p(X_i) = p(X_i \mid \cdot)$~\cite{nguyen:cmi}.

As we can see, Eq.~(\ref{eq:corrfact}) is a factorized form of Equation~\eqref{eq:corr}. In particular, it computes $\corr(X_{1, \ldots, d})$ by summing up the difference between the marginal distribution $p(X_i)$ and the conditional distribution $p(X_i \mid X_{1, \ldots, i-1})$ for $i \in [2, d]$. In this way, loosely speaking $\corr(S)$ is the sum of the correlation scores of subspaces

$(X_1, X_2), \ldots, (X_1, \ldots, X_i), \ldots, (X_1, \ldots, X_d)$

\noindent if we consider $\diff(p(X_i), p(X_i \mid X_{1, \ldots, i-1}))$ to be the correlation score of the subspace $(X_{1, \ldots, i})$. The advantage of using lower dimensional subspaces is that $\corr(X_{1, \ldots, d})$ in Eq.~(\ref{eq:corrfact}) is more robust to high dimensionality. It, however, in general is variant to the way we form the factorization, i.e.\ the permutation of dimensions used. 

We eliminate such dependence by taking the maximum score over all permutations. By considering the maximum value, we aim at uncovering the best correlation score of the dimensions involved, which is in line with maximal correlation analysis~\cite{breiman:maxcorr,rao:measure}. Formally, letting $\mathcal{F}_d$ be the set of bijective functions $\sigma: \{1, \ldots, d\} \rightarrow \{1, \ldots, d\}$, we have
\begin{equation} \label{eq:corrfactmax}
\textstyle\corr(X_{1, \ldots, d}) = \max\limits_{\sigma \in \mathcal{F}_d} \sum\limits_{i=2}^{d} \diff\left(p(X^\sigma_i), p(X^\sigma_i \mid X^\sigma_{1, \ldots, i-1})\right)
\end{equation}
where $X^\sigma_i = X_{\sigma(i)}$ for $i \in [1, d]$. Like Eq.~(\ref{eq:corrfact}), $\corr(X_{1, \ldots, d})$ in Eq.~(\ref{eq:corrfactmax}) also is more robust to high dimensionality. Further, it is permutation invariant. We design \uds based on this factorized form.

To this end, one important issue however remains open, which is: How to quantify $\diff(p(X_i), p(X_i \mid \cdot))$ to fulfill universality? We address this by means of cumulative entropy~\cite{rao:cre,crescenzo:ce} which we introduce next.

\section{Cumulative Entropy} \label{sec:cecmi}

In this section, we first provide background of cumulative entropy (\ce). Then, we review Cumulative Mutual Information (\cmi)~\cite{nguyen:cmi} -- a correlation measure that is defined based on \ce but does not address universality.

\subsection{Background of Cumulative Entropy} \label{sec:ce}

In principle, $\ce$ captures the information content (i.e.\ complexity) of a probability distribution. However, different from Shannon entropy, it works with cdfs and can be regarded as a substitute for Shannon entropy on real-valued data.
Formally, the $\ce$ of a real-valued univariate random variable $\X$ is given as
$$\textstyle h(X) = - \int P(x) \log P(x) dx.$$
The conditional $\ce$ of a real-valued univariate random variable $X$ given $Z \in \mathbb{R}^d$ is defined as
$$\textstyle h(X \mid Z) = \int h(X \mid z) p(z) dz.$$
The conditional $\ce$ has two important properties given by the following theorem~\cite{crescenzo:ce,rao:cre}.

\begin{theorem} \label{theo:cenonneg}
$h(X \mid Z) \geq 0$ with equality iff $X$ is a function of $Z$. $h(X \mid Z) \leq h(X)$ with equality iff $X$ is statistically independent of $Z$.
\end{theorem}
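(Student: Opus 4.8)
The plan is to treat the two inequalities separately, in each case reducing the claim to a pointwise statement about the integrand and then handling the equality case.

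For the first claim I would argue non-negativity pointwise. Writing $h(X \mid z) = -\int P(x \mid z) \log P(x \mid z)\, dx$, observe that $P(x \mid z) \in [0,1]$ is a conditional cdf, so the integrand $-P(x\mid z)\log P(x\mid z) \ge 0$ for every $x$; integrating over $x$ and then against $p(z) \ge 0$ gives $h(X \mid Z) \ge 0$. For equality, note that $-t\log t = 0$ on $[0,1]$ only at $t=0$ and $t=1$. Hence $h(X \mid z)=0$ forces $P(x \mid z) \in \{0,1\}$ for (almost) every $x$; since $P(\cdot \mid z)$ is a nondecreasing cdf ranging from $0$ to $1$, it must be a single step, i.e.\ the conditional law of $X$ given $z$ is a point mass. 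Requiring this for $p$-almost every $z$ is exactly the statement that $X$ is a function of $Z$.

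For the second claim the key tools are the concavity of $\phi(t) = -t\log t$ on $[0,1]$ (its second derivative is $-1/t < 0$) and the law of total probability for cdfs, $P(x) = \int P(x\mid z)\, p(z)\, dz = \mathbb{E}_Z[P(x \mid Z)]$. Applying Jensen's inequality at each fixed $x$ gives $\phi(P(x)) \ge \mathbb{E}_Z[\phi(P(x\mid Z))]$, that is, $-P(x)\log P(x) \ge \int -P(x\mid z)\log P(x\mid z)\, p(z)\, dz$. Integrating over $x$ and swapping the order of integration (legitimate by Tonelli, the integrand being non-negative) yields $h(X) \ge \int h(X \mid z)\, p(z)\, dz = h(X \mid Z)$.

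The equality analysis is where I expect the real work. Since $\phi$ is strictly concave, for each fixed $x$ equality in Jensen holds iff $P(x \mid Z)$ is constant in $z$ ($p$-a.s.), i.e.\ $P(x \mid z) = P(x)$. To obtain equality after integrating over $x$ one needs this for Lebesgue-almost every $x$; I would then argue, using the monotonicity and right-continuity of cdfs, that $P(x \mid z) = P(x)$ for a.e.\ $(x,z)$ upgrades to equality at every $x$, which is precisely $p(X \mid Z) = p(X)$, the definition of statistical independence. The main obstacle is thus the careful handling of these ``almost every'' qualifiers, ensuring that pointwise equality of the integrands translates faithfully into the global independence statement, rather than the inequalities themselves, which follow directly from non-negativity and Jensen.
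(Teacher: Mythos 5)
The paper does not actually prove Theorem~\ref{theo:cenonneg}; it imports the result from the cited works on cumulative (residual) entropy, so there is no in-paper proof to compare against. Your argument is correct and is essentially the standard proof from those references: non-negativity follows pointwise from $-t\log t\ge 0$ on $[0,1]$, with the equality case forcing the conditional cdf to be a single step (a point mass, i.e.\ $X$ a function of $Z$), and the upper bound follows from Jensen's inequality applied to the strictly concave $\phi(t)=-t\log t$ together with the mixture identity $P(x)=\mathbb{E}_Z[P(x\mid Z)]$, with strict concavity yielding $P(x\mid Z)=P(x)$ a.s.\ in the equality case. The only delicate point is the one you explicitly flag --- upgrading the almost-everywhere identities in $(x,z)$ to the global independence statement via monotonicity and right-continuity of cdfs --- and your sketch of that step is sound.
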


Besides, unconditional $\ce$ can be computed in closed-form on empirical data. Let $x_1 \leq \ldots \leq x_{\size}$ be the ordered records of $X$. We have
$\textstyle h(X) = - \sum\limits_{i=1}^{\size-1} (x_{i+1} - x_i) \frac{i}{\size} \log \frac{i}{\size}$.\\
Having introduced \ce, next we review \cmi.

\subsection{Cumulative Mutual Information} \label{sec:cmi}

\cmi follows the factorized model of correlation measures in Eq.~(\ref{eq:corrfactmax}). It instantiates $\diff(p(X_i), p(X_i \mid \cdot))$ by $h(X_i) - h(X_i \mid \cdot)$. Following Theorem~\ref{theo:cenonneg}, this instantiation is non-negative and zero iff $p(X_i) = p(X_i \mid \cdot)$, which is desirable (cf.\ Section~\ref{sec:pre}). Formally, we have:

\begin{Definition} \label{def:cmi} \textbf{Cumulative Mutual Information (\cmi)}\\
The \cmi of $X_{1, \ldots, d}$ is
$$\textstyle\cmi(X_{1, \ldots, d}) = \max\limits_{\sigma \in \mathcal{F}_d} \sum\limits_{i=2}^d h(X_{\sigma(i)}) - h(X_{\sigma(i)} \mid X_{\sigma(1), \ldots, \sigma(i-1)})$$
where $h(X_{\sigma(i)} \mid X_{\sigma(1) \ldots, \sigma(i-1)})$ is $h(X_{\sigma(i)} \mid Z)$ with $Z = X_{\sigma(1), \ldots, \sigma(i-1)}$ being a random vector whose domain is $\dom(X_{\sigma(1)}) \times \cdots \times \dom(X_{\sigma(i-1)})$.
\end{Definition}

Following Theorem~\ref{theo:cenonneg}, \cmi satisfies the non-negativity and zero score requirements of correlation measure. Further, it is non-parametric and can capture different types of correlation. It however has some pitfalls that we explain next.

\subsection{Drawbacks of \cmi} \label{sec:cmidrawback}

First, \cmi does not address universality. In particular, it tends to give higher dimensional subspaces higher scores, as follow.

\begin{lemma} \label{lem:cmibias}
$\cmi(X_{1, \ldots, d}) \leq \cmi(X_{1, \ldots, d + 1})$.
\end{lemma}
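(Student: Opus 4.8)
The plan is to exploit two structural features of $\cmi$: it is a maximum over permutations, and by Theorem~\ref{theo:cenonneg} every summand $h(X_{\sigma(i)}) - h(X_{\sigma(i)} \mid X_{\sigma(1), \ldots, \sigma(i-1)})$ is non-negative (since $h(X \mid Z) \leq h(X)$). The idea is that passing from $d$ to $d{+}1$ dimensions only \emph{adds} one more non-negative term to an already achievable sum, so the attainable value cannot drop, and therefore neither can the maximum.

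Concretely, I would first let $\sigma^*$ be a permutation achieving the maximum in $\cmi(X_{1, \ldots, d})$, so that $\cmi(X_{1, \ldots, d}) = \sum_{i=2}^{d} \left[ h(X_{\sigma^*(i)}) - h(X_{\sigma^*(i)} \mid X_{\sigma^*(1), \ldots, \sigma^*(i-1)}) \right]$. I then extend $\sigma^*$ to a permutation $\tau \in \mathcal{F}_{d+1}$ by appending the new index last, i.e.\ $\tau(i) = \sigma^*(i)$ for $i \in \{1, \ldots, d\}$ and $\tau(d{+}1) = d{+}1$. Since $\cmi(X_{1, \ldots, d+1})$ is defined as a maximum over all of $\mathcal{F}_{d+1}$, the particular choice $\tau$ furnishes a lower bound:
\[
\cmi(X_{1, \ldots, d+1}) \geq \sum_{i=2}^{d+1} \left[ h(X_{\tau(i)}) - h(X_{\tau(i)} \mid X_{\tau(1), \ldots, \tau(i-1)}) \right].
\]

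The key step is to split this $\tau$-indexed sum at $i = d$. Because $\tau$ restricted to $\{1, \ldots, d\}$ is exactly $\sigma^*$, the terms for $i = 2, \ldots, d$ reproduce $\cmi(X_{1, \ldots, d})$ verbatim, while the single remaining term is $h(X_{d+1}) - h(X_{d+1} \mid X_{\tau(1), \ldots, \tau(d)})$, which is $\geq 0$ by Theorem~\ref{theo:cenonneg}. Hence the right-hand side equals $\cmi(X_{1, \ldots, d})$ plus a non-negative quantity, giving $\cmi(X_{1, \ldots, d+1}) \geq \cmi(X_{1, \ldots, d})$ as claimed.

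There is no genuine analytic obstacle here; the argument is a one-line monotonicity observation dressed up in the permutation notation. The only point that warrants a little care is bookkeeping: one must check that the chosen extension $\tau$ really does leave the first $d{-}1$ summands identical to those defining $\cmi(X_{1, \ldots, d})$ (so that the conditioning sets $X_{\tau(1), \ldots, \tau(i-1)}$ coincide with $X_{\sigma^*(1), \ldots, \sigma^*(i-1)}$ for each $i \leq d$), and that we invoke the maximum in the correct direction -- a suboptimal permutation for the larger space still lower-bounds its $\cmi$. Everything else follows directly from the non-negativity half of Theorem~\ref{theo:cenonneg}.
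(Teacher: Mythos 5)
Your proof is correct and follows essentially the same route as the paper's: extend a permutation of $\{1,\ldots,d\}$ to one of $\{1,\ldots,d{+}1\}$ by appending the new index last, note that the single added summand $h(X_{d+1}) - h(X_{d+1} \mid \cdot)$ is non-negative by Theorem~\ref{theo:cenonneg}, and conclude via the maximum over $\mathcal{F}_{d+1}$. Your write-up is in fact slightly cleaner, since you fix the maximizing $\sigma^*$ explicitly and state the conditioning set of the appended term correctly, whereas the paper argues for every $\sigma$ at once.
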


\proofApx

In addition, \cmi scores of subspaces with the same dimensionality may also have different scales. To show this, we prove that the scale of $\cmi(X_{1, \ldots, d})$ is dependent on $h(X_1), \ldots, h(X_d)$.

\begin{lemma} \label{lem:bound1}
$\cmi(X_{1, \ldots, d}) \leq \max\limits_{\sigma \in \mathcal{F}_d} \sum\limits_{i=2}^d h(X_{\sigma(i)})$.
\end{lemma}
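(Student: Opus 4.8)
The plan is to exploit the non-negativity of conditional cumulative entropy, which is exactly the first part of Theorem~\ref{theo:cenonneg}. First I would observe that for every permutation $\sigma \in \mathcal{F}_d$ and every index $i \in [2, d]$, the conditional term $h(X_{\sigma(i)} \mid X_{\sigma(1), \ldots, \sigma(i-1)})$ is an instance of $h(X \mid Z)$ and is therefore $\geq 0$. Consequently each summand in the definition of \cmi is dominated by its unconditional part,
$$h(X_{\sigma(i)}) - h(X_{\sigma(i)} \mid X_{\sigma(1), \ldots, \sigma(i-1)}) \leq h(X_{\sigma(i)}).$$
Summing this over $i = 2, \ldots, d$ gives, for every fixed $\sigma$,
$$\sum_{i=2}^d \left( h(X_{\sigma(i)}) - h(X_{\sigma(i)} \mid X_{\sigma(1), \ldots, \sigma(i-1)}) \right) \leq \sum_{i=2}^d h(X_{\sigma(i)}).$$

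The only care needed is in handling the maximum on both sides, since one cannot in general distribute a maximum over a difference. To do this cleanly I would let $\sigma^*$ denote a permutation attaining the maximum that defines $\cmi(X_{1, \ldots, d})$. Applying the displayed inequality at $\sigma = \sigma^*$ yields
$$\cmi(X_{1, \ldots, d}) = \sum_{i=2}^d \left( h(X_{\sigma^*(i)}) - h(X_{\sigma^*(i)} \mid X_{\sigma^*(1), \ldots, \sigma^*(i-1)}) \right) \leq \sum_{i=2}^d h(X_{\sigma^*(i)}),$$
and since the right-hand side is one particular value of $\sum_{i=2}^d h(X_{\sigma(i)})$, it is bounded above by $\max_{\sigma \in \mathcal{F}_d} \sum_{i=2}^d h(X_{\sigma(i)})$. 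Chaining the two inequalities delivers the claim.

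The argument is short and I do not anticipate a genuine obstacle; the single subtle point is the final step, where it is tempting but incorrect to push the maximum inside the summand-wise bound. Fixing the maximizing permutation $\sigma^*$ for the left-hand side first, and only then weakening to the global maximum over all permutations on the right, avoids this pitfall. As a concluding remark I would note that $\max_{\sigma \in \mathcal{F}_d} \sum_{i=2}^d h(X_{\sigma(i)}) = \sum_{j=1}^d h(X_j) - \min_{j} h(X_j)$, which makes explicit that the scale of $\cmi(X_{1, \ldots, d})$ is governed by the marginal cumulative entropies $h(X_1), \ldots, h(X_d)$ --- precisely the dependence the lemma is meant to expose.
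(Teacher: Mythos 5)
Your proof is correct and follows essentially the same route as the paper: both invoke the non-negativity of conditional cumulative entropy from Theorem~\ref{theo:cenonneg} to bound each summand by its unconditional term, and then bound the resulting sum by the maximum over all permutations. Your explicit handling of the maximizing permutation $\sigma^*$ and the closing observation that $\max_{\sigma \in \mathcal{F}_d} \sum_{i=2}^d h(X_{\sigma(i)}) = \sum_{j=1}^d h(X_j) - \min_j h(X_j)$ are welcome clarifications but do not change the argument.
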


\proofApx

That is, if two subspaces of the same dimensionality have no common dimension, the upper bound of their \cmi scores may be different, rendering incomparable scales. Combining the results of Lemma~\ref{lem:cmibias} and~\ref{lem:bound1}, we can see that \cmi does not address universality.

As the second issue, to compute \cmi score of two or more dimensions \cmi needs to search for the optimal permutation of the dimensions. The quality of this search is dependent on how well conditional \ce terms are estimated. As \cmi computes such terms using clustering, the search quality and hence the quality of \cmi are dependent on how good the selected clustering algorithm is. Choosing a suitable clustering method, however, is non-trivial.

Our \uds measure is also based on \ce. In contrast to \cmi, it does address all requirements of a good universal correlation measure.

\section{Universal Dependency Analysis} \label{sec:uda}

In short, \uds builds upon and non-trivially extends \cmi to fulfill universality. More specifically, to bring correlation scores to the same scale -- regardless of the number as well as statistical properties of dimensions involved, we first perform normalization of the scores. Second, we judiciously fix permutation of dimensions, i.e.\ no search is required. Third, to avoid data clustering in correlation computation we propose \textit{optimal} discretization to compute conditional \ce terms. Our optimization problem is formulated such that resulting conditional \ce values are not overfit. In the following, we focus on the first two aspects of \uds and postpone the third one to Section~\ref{sec:comp}.

\subsection{Universal Dependency Score Function} \label{sec:uds}

Our goal is to normalize the scores such that they fall into the range $[0, 1]$ where $0$ means no correlation at all. Note that simply normalizing $\cmi(X_{1, \ldots, d})$ by $d$ is not the solution. This is because if we did so, following Lemma~\ref{lem:bound1} the normalized score would be upper-bounded by $\max\limits_{\sigma \in \mathcal{F}_d} \sum_{i=2}^{d} h(X_{\sigma(i)})/d$. As $\max\limits_{\sigma \in \mathcal{F}_d} \sum_{i=2}^{d} h(X_{\sigma(i)})$ is dependent on $X_1, \ldots, X_d$, the requirement of unbiased scores is not met. Hence, we instead perform normalization based on the following observation.

\begin{lemma} \label{lem:bound}
For each permutation $\sigma \in \mathcal{F}_d$,\\
$\textstyle\sum\limits_{i=2}^{d} h(X_{\sigma(i)}) - h(X_{\sigma(i)} \mid X_{\sigma(1), \ldots, \sigma(i-1)}) \leq \sum\limits_{i=2}^d h(X_{\sigma(i)})$\\
with equality iff $X_{\sigma(2), \ldots, \sigma(d)}$ are functions of $X_{\sigma(1)}$.
\end{lemma}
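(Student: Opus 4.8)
The plan is to reduce everything to the two properties of conditional cumulative entropy recorded in Theorem~\ref{theo:cenonneg}. Subtracting $\sum_{i=2}^d h(X_{\sigma(i)})$ from both sides, the claimed inequality is equivalent to
$$\textstyle\sum\limits_{i=2}^{d} h(X_{\sigma(i)} \mid X_{\sigma(1), \ldots, \sigma(i-1)}) \geq 0,$$
so the bound is immediate: every summand is non-negative by the first part of Theorem~\ref{theo:cenonneg}. That settles the inequality with essentially no work.

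For the equality case, since the quantity above is a sum of non-negative terms, it vanishes if and only if every term vanishes, i.e.\ $h(X_{\sigma(i)} \mid X_{\sigma(1), \ldots, \sigma(i-1)}) = 0$ for all $i \in [2, d]$. By the equality condition in Theorem~\ref{theo:cenonneg}, this holds exactly when each $X_{\sigma(i)}$ is a function of the preceding variables $X_{\sigma(1), \ldots, \sigma(i-1)}$. What remains is to show this \emph{chain} condition is equivalent to the statement in the lemma, namely that each of $X_{\sigma(2)}, \ldots, X_{\sigma(d)}$ is a function of $X_{\sigma(1)}$ alone.

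One direction is trivial: if $X_{\sigma(i)}$ is a function of $X_{\sigma(1)}$, then a fortiori it is a function of the larger collection $X_{\sigma(1), \ldots, \sigma(i-1)}$, so every conditional term vanishes and equality holds. For the converse I would argue by induction on $i$. The base case $i = 2$ is precisely the hypothesis that $X_{\sigma(2)}$ is a function of $X_{\sigma(1)}$. For the inductive step, assume $X_{\sigma(2)}, \ldots, X_{\sigma(i-1)}$ are all functions of $X_{\sigma(1)}$; since the chain condition gives $X_{\sigma(i)} = f(X_{\sigma(1)}, \ldots, X_{\sigma(i-1)})$, substituting each $X_{\sigma(j)}$ with $j < i$ by its expression in $X_{\sigma(1)}$ exhibits $X_{\sigma(i)}$ as a composite function of $X_{\sigma(1)}$.

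The only delicate point is this final substitution; everything else is a direct application of Theorem~\ref{theo:cenonneg}. The key fact to state carefully is that functional dependence composes — a function of quantities that are themselves functions of $X_{\sigma(1)}$ is again a function of $X_{\sigma(1)}$ — which is exactly what lets the chain of conditionings collapse onto the single variable $X_{\sigma(1)}$. I expect no analytic obstacle here, since the argument never touches the integral definition of $\ce$ and relies solely on the characterizations already proved.
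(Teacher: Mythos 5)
Your proof is correct and follows essentially the same route as the paper's: the inequality reduces to the non-negativity of each conditional $\ce$ term from Theorem~\ref{theo:cenonneg}, and equality reduces to the chain condition that each $X_{\sigma(i)}$ is a function of $X_{\sigma(1), \ldots, \sigma(i-1)}$, which collapses to functional dependence on $X_{\sigma(1)}$ alone. The only difference is that you spell out the inductive substitution argument for that collapse, which the paper merely asserts as an equivalence.
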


\proofApx

With Lemma~\ref{lem:bound}, we are now ready to define \uds. In particular, we have:

\begin{Definition} \label{def:uds} \textbf{Universal Dependency Score (\uds)}\\
The \uds of $X_{1, \ldots, d}$ is $\textstyle\uds(X_{1, \ldots, d}) = \max\limits_{\sigma \in \mathcal{F}_d} \Phi_\sigma(X_{1, \ldots, d})$\\
where
\begin{equation} \label{eq:phi}
\textstyle\Phi_\sigma(X_{1, \ldots, d}) = \frac{\sum\limits_{i=2}^{d} h(X_{\sigma(i)}) - h(X_{\sigma(i)} \mid X_{\sigma(1), \ldots, \sigma(i-1)})}{\sum\limits_{i=2}^d h(X_{\sigma(i)})}
\end{equation}
and with the convention that $\frac{0}{0} = 0$.
\end{Definition}

That is, $\uds(X_{1, \ldots, d})$ is the maximum \textit{normalized} correlation over all permutation of $X_{1, \ldots, d}$. It is non-parametric and can capture both linear or non-linear correlations. To show that \uds meets universality, we prove some of its relevant properties below.

\begin{lemma} \label{lem:udsbound}
We have\\
$\bullet$ $0 \leq \uds(X_{1, \ldots, d}) \leq 1$.\\
$\bullet$ $\uds(X_{1, \ldots, d}) = 0$ iff $X_{1, \ldots, d}$ are independent.\\
$\bullet$ $\uds(X_{1, \ldots, d}) = 1$ iff there exists $X_i$ such that each $X_j \in \{X_{1, \ldots, d}\} \setminus \{X_i\}$ is a function of $X_i$.
\end{lemma}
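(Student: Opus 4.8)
The plan is to verify the three parts in turn, in each case reducing to the per-permutation quantity $\Phi_\sigma$ and invoking Theorem~\ref{theo:cenonneg} together with Lemma~\ref{lem:bound}; the recurring subtlety I will have to watch is the $\frac{0}{0}=0$ convention, i.e.\ the case where the denominator $\sum_{i=2}^d h(X_{\sigma(i)})$ degenerates to $0$.

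First, for the bounds I would show that every $\Phi_\sigma$ lies in $[0,1]$, so that the maximum over the finite set $\mathcal{F}_d$ does too. Cumulative entropy is non-negative (since $P(x)\in[0,1]$ gives $-P(x)\log P(x)\geq 0$), so the denominator is $\geq 0$; by the second part of Theorem~\ref{theo:cenonneg} each summand $h(X_{\sigma(i)})-h(X_{\sigma(i)}\mid X_{\sigma(1),\ldots,\sigma(i-1)})$ of the numerator is $\geq 0$, so the numerator is $\geq 0$; and Lemma~\ref{lem:bound} gives numerator $\leq$ denominator. Hence $\Phi_\sigma\in[0,1]$ whenever the denominator is positive, and $\Phi_\sigma=0\in[0,1]$ by convention otherwise.

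Second, for ``$\uds=0$ iff independent'': for the easy direction, if $X_{1,\ldots,d}$ are mutually independent then $X_{\sigma(i)}$ is independent of $X_{\sigma(1),\ldots,\sigma(i-1)}$ for every $\sigma$ and every $i$, so by Theorem~\ref{theo:cenonneg} each numerator term vanishes and $\Phi_\sigma=0$, giving $\uds=0$. Conversely, $\uds=0$ forces every $\Phi_\sigma=0$. For a permutation with positive denominator this forces the non-negative numerator to be $0$, hence every term to be $0$, hence $X_{\sigma(i)}\bigCI X_{\sigma(1),\ldots,\sigma(i-1)}$ for all $i$ by the equality case of Theorem~\ref{theo:cenonneg}; the standard chain characterization of mutual independence (factorizing $p(x_1,\ldots,x_d)$ through the conditionals) then yields full independence. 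The only loose end is the degenerate case where the denominator is $0$ for all $\sigma$: then $h(X_j)=0$ for every $j$, so each $X_j$ is a.s.\ constant and the variables are trivially independent.

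Third, for ``$\uds=1$ iff some $X_i$ determines the rest'': since $\mathcal{F}_d$ is finite the maximum is attained, so $\uds=1$ gives a $\sigma^*$ with $\Phi_{\sigma^*}=1$; by the convention this rules out a zero denominator and forces numerator $=$ denominator, whereupon the equality case of Lemma~\ref{lem:bound} says exactly that $X_{\sigma^*(2)},\ldots,X_{\sigma^*(d)}$ are functions of $X_{\sigma^*(1)}$, i.e.\ $X_i:=X_{\sigma^*(1)}$ determines all others. For the converse, given such an $X_i$ I would choose $\sigma$ with $\sigma(1)=i$; then each $X_{\sigma(j)}$ with $j\geq 2$ is a function of $X_{\sigma(1)}$ and a fortiori of $X_{\sigma(1),\ldots,\sigma(j-1)}$, so the equality case of Lemma~\ref{lem:bound} gives numerator $=$ denominator and $\Phi_\sigma=1$. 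I expect the main obstacle to be bookkeeping around the $\frac{0}{0}$ convention rather than any deep step: one must check that degenerate (constant) dimensions do not spuriously create or destroy the extreme values $0$ and $1$. In particular the converse of the third part implicitly needs the determining variable to leave at least one non-degenerate dimension behind so that the denominator is strictly positive; I would state this (equivalently, assume $h(X_j)>0$ for all $j$) to keep the ``$=1$'' characterization clean. The other non-mechanical ingredient is the chain characterization of mutual independence used in the forward direction of part two.
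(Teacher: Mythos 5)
Your proof is correct and follows essentially the same route as the paper's: the bounds come from Theorem~\ref{theo:cenonneg} together with Lemma~\ref{lem:bound}, the ``$=0$'' case from the equality condition of Theorem~\ref{theo:cenonneg} plus the chain factorization of mutual independence, and the ``$=1$'' case from the equality condition of Lemma~\ref{lem:bound}. Your additional bookkeeping around the $\frac{0}{0}=0$ convention is more careful than the paper's (which silently ignores degenerate dimensions), and your remark that the converse of the third bullet requires at least one non-degenerate remaining dimension so that the denominator is positive is a legitimate caveat the paper's proof does not address.
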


\proofApx

Hence, \uds score of \textit{any} subspace falls in the range $[0, 1]$, which means that we can compare correlation scores across different subspaces. Thus, \uds addresses universality. It also meets the non-negativity and zero score requirements. In addition, as the values of \uds are bounded on both sides we can interpret its scores more easily, which is clearly a desirable property for practical correlation analysis~\cite{renyi:corr}.

\subsection{Practical \uds} \label{sec:pracuds}

To compute \uds, we still need to look for the permutation that maximizes the score. There are $d!$ candidate permutations in total. When $d$ is large, the search space is prohibitively large while in general has no clear structure to optimize over. Thus, to boost efficiency we propose a practical (heuristic) version of \uds. It intuitively fixes a permutation for correlation computation and hence saves time. Our experiments confirm that it works very well in practice. Below we provide its formal definition.

\begin{Definition} \label{def:pracuds} \textbf{Practical \uds}\\
The practical \uds of $X_{1, \ldots, d}$ is
$$\uds_{\pr}(X_{1, \ldots, d}) = \Phi_\sigma(X_{1, \ldots, d})$$
where $\sigma \in \mathcal{F}_d$ is such that $h(X_{\sigma(1)}) \geq \ldots \geq h(X_{\sigma(d)})$.
\end{Definition}

\noindent In other words, $\uds_{\pr}$ chooses the permutation corresponding to the sorting of dimensions in descending order of \ce values. We now give the intuition behind this design choice.

We see that to compute \uds, we must find the permutation $\pi \in \mathcal{F}_d$ such that $\Phi_\pi(X_{1, \ldots, d})$ is maximal. To maximize this term, we should minimize its denominator and maximize its numerator; see Eq.~(\ref{eq:phi}). For the former, it would most likely help to exclude $h(X_{\sigma(1)})$, the largest unconditional \ce term. Thus, we expect a permutation where $X_{\sigma(1)}$ appears first to be good.

For the numerator, we make the following observation. Assume that $h(X_i) \geq h(X_j)$, i.e., $X_i$ is more random than $X_j$. Then $h(X_k \mid X_i)$ tends to be smaller than $h(X_k \mid X_j)$~\cite{rao:cre}. For instance, if $X_j$ is deterministic, $h(X_k \mid X_i) \leq h(X_k \mid X_j) = h(X_k)$. So $h(X_k \mid \cdot)$ tends to get further away from $h(X_k)$ as the conditional part becomes more random, and vice versa.

Now assume that $h(X_k) \geq h(X_i)$. If $X_k$ is after $X_i$ in the permutation, $h(X_k)$ will appear in the numerator. However, $h(X_k \mid \cdot)$ tends to get close to $h(X_k)$ as the conditional part containing $X_i$ is less random, i.e.\ $h(X_k) - h(X_k \mid \cdot)$ tends to be small.

If in the permutation $X_k$ instead is before $X_i$, we will have $h(X_i)$ in the numerator. However, $h(X_i \mid \cdot)$ gets further away from $h(X_i)$, i.e., $h(X_i) - h(X_i \mid \cdot)$ tends to be relatively large.

All in all, these suggest that: \textit{dimensions with large \ce values should be placed before those with small \ce values to maximize the numerator}. Our experiments reveal that $\uds_{\pr}$ works very well in practice. In addition, similarly to \uds its scores are in the range $[0, 1]$. Thus, $\uds_{\pr}$ also fulfills universality. Further, it satisfies the non-negativity and zero score requirements. We postpone the formal proof of these to Lemma~\ref{lem:pracudsbound} of Appendix~\ref{sec:proofs}.

We will use $\uds_{\pr}$ in the rest of this paper and simply call it \uds. In summary, \uds allows for universal correlation assessment of subspaces with potentially different dimensionality as well as different statistical properties of the involved dimensions. In addition, its computation does not require searching for an optimal permutation nor computing all pairwise correlations. Next, we explain how to efficiently and non-parametrically compute \uds on empirical data.

\section{Computing \uds} \label{sec:comp}

In this section, for exposition we focus on computing $\uds(X_{1, \ldots, d})$. W.l.o.g., we assume that

$\qquad\textstyle\uds(X_{1, \ldots, d}) = \frac{\sum\limits_{i=2}^{d} h(X_i) - h(X_i \mid X_{1, \ldots, i-1})}{\sum\limits_{i=2}^d h(X_i)}.$

\noindent That is, the permutation of dimensions in $\uds(X_{1, \ldots, d})$, which is fixed, is assumed to be $X_1, \ldots, X_d$.

To compute $\uds(X_{1, \ldots, d})$, we need to compute unconditional \ce terms $h(X_i)$ and conditional \ce terms $h(X_i \mid X_{1, \ldots, i-1})$. Following Section~\ref{sec:ce}, $h(X_i)$ where $i \in [2, d]$ can be computed in closed form.

For the conditional \ce terms, we propose to compute them by \textit{optimal discretization}, which has been shown to preserve correlation structures in data~\cite{nguyen:mac,jilles:causal}. In particular, we formulate the computation of conditional \ce terms as optimization problems where we search for discretizations that \textit{robustly} maximize $\uds(X_{1, \ldots, d})$. Being robust here means that we aim to obtain bins that preserve true correlation in the data, while avoiding seeing structure when there is none, i.e.\ \textit{overfitting}. Note that optimal discretization have been proposed by us in~\cite{nguyen:mac,jilles:causal}. Our current work differ from previous work in that we here introduce a different way to formulate the solution of optimal discretization. Therewith we are able to provide non-trivial details of the algorithmic approach, which consequently gives us more insight on the runtime complexity. More on this will come shortly. For now, we compute $\uds(X_{1, \ldots, d})$ as follows.

\vspace{0.5em}
\noindent\textit{Computing $h(X_2 \mid X_1)$:} The value of $h(X_2 \mid X_1)$ depends on how we estimate the distribution of $X_1$, or in other words, how we discretize the realizations of $X_1$. Here, we propose to search for the discretization of $X_1$ that robustly maximizes $h(X_2) - h(X_2 \mid X_1)$. As $h(X_2)$ is fixed given the realizations of $X_2$, maximizing $h(X_2) - h(X_2 \mid X_1)$ is equivalent to minimizing $h(X_2 \mid X_1)$.

\vspace{0.5em}
\noindent\textit{Computing $h(X_i \mid X_{1, \ldots, i-1})$ for $i \geq 3$:} Ideally, one would simultaneously search for the optimal discretizations of $X_{1, \ldots, i-1}$ that robustly minimize $h(X_i \mid X_{1, \ldots, i-1})$. This however is very computationally expensive. We overcome this by observing that, to this end, we have already discretized $X_{1, \ldots, i-2}$, and the resulting discretizations are for robustly maximizing $\uds(X_{1, \ldots, d})$. Hence, we choose to search for the discretization of $X_{i-1}$ only. By not re-discretizing any dimension already processed, we strongly reduce runtime.

\vspace{0.5em}
\noindent We now prove that the discretization at each step can be searched efficiently by \textit{dynamic programming}. Na\"ively, the optimization of each step can be cast as: Find the discretization of $X$ minimizing $h(X' \mid I, X)$ where $I$ is the set of dimensions we have already discretized, and $X, X' \in \{X_{1, \ldots, d}\} \setminus I$. However, the more random $X$ is, the smaller $h(X' \mid I, X)$~\cite{rao:cre}. In terms of discretization, this means that when $X$ is discretized into more bins, $h(X' \mid I, X)$ tends to be smaller. Or put differently, this na\"ive optimization may prefer solutions with many bins to those with fewer bins, and hence, potentially pick up spurious correlation.

To avoid this issue, we propose to first search for the optimal discretization at each permissible number of bins. Then, using a \textit{regularizer} we perform model selection to identify the discretization that best balances between correlation preservation and robustness. Our regularizer takes into account the number of bins and hence alleviates the overfitting issue. The flow is as follow: optimal discretization first, then model selection.

\subsection{Optimal Discretization}

Our problem can be stated as: \textit{Given an integer $\lambda \in [1, \size]$, find the discretization of $X$ into $\lambda$ bins that minimizes $h(X' \mid I, X)$ where $I$ is the set of dimensions we have already discretized, and $X, X' \in \{X_{1, \ldots, d}\} \setminus I$}.

\vspace{0.5em}
\noindent\textbf{Proof of dynamic programming.}\ To prove that dynamic programming is applicable, we prove that the optimal solution to the above problem exhibits optimal substructure.

Formally, let $\Dsc_\lambda$ be the set of possible discretizations on $X$ that produce exactly $\lambda$ bins. For each $g \in \Dsc_\lambda$, we let $\{b_1^g, \ldots, b_{\lambda}^g\}$ be the set of bins formed by $g$. Each bin $b_i^g = (l_i^g, u_i^g]$ where $l_1^g = \minv(X)$, $u_{\lambda}^g = \maxv(X)$, and $l_i^g = u_{i-1}^g$ for $i \in [2, \lambda]$. There is a one-to-one correspondence between each discretization and the set of bins it forms. Thus, we will use both interchangeably.

Note that the dimensions in $I$ have been discretized. Their discrete space consists of hypercubes; each has $|I|$ sides corresponding to $|I|$ dimensions. Let $C_1, \ldots, C_k$ be the non-empty hypercubes. It holds that $k < \size$.
For each bin $b_i^g$ ($i \in [1, \lambda]$) and hypercube $C_j$ ($j \in [1, k]$), we write $|C_j, b_i^g|$ as the number of data points falling into the $(|I| + 1)$-dimensional hypercube made up by extending $C_j$ with $b_i^g$. Our optimization problem is equivalent to solving
\begin{equation} \label{eq:opt1}
\textstyle\min\limits_{g \in \Dsc_\lambda} \sum\limits_{i=1}^{\lambda} \sum\limits_{j=1}^k \frac{|C_j, b_i^g|}{\size} h(X' \mid C_j, b_i^g).
\end{equation}
Let $\dsc$ be the optimal solution and $\{b_1^{\dsc}, \ldots, b_{\lambda}^{\dsc}\}$ be its bins. For any bin $b$, we write $h(X' \mid I, b)$ as $h(X' \mid I)$ computed using only the points falling into $b$. We have that
\begin{align} \label{eq:optval1}
& \textstyle\sum\limits_{i=1}^{\lambda} \sum\limits_{j=1}^k \frac{|C_j, b_i^{\dsc}|}{\size} h(X' \mid C_j, b_i^{\dsc}) \\
& \textstyle= \frac{|b_{\lambda}^{\dsc}|}{\size} \underbrace{\textstyle\sum\limits_{j=1}^k \frac{|C_j, b_{\lambda}^{\dsc}|}{|b_{\lambda}^{\dsc}|} h(X' \mid C_j, b_{\lambda}^{\dsc})}_{h(X' \mid I, b_{\lambda}^{\dsc})} \nonumber \\
& \textstyle\quad+\frac{\size - b_{\lambda}^{\dsc}}{\size} \sum\limits_{i=1}^{\lambda-1} \sum\limits_{j=1}^k \frac{|C_j, b_i^{\dsc}|}{\size - b_{\lambda}^{\dsc}} h(X' \mid C_j, b_i^{\dsc}) \nonumber
\end{align}
must be minimal among all discretization $g \in \Dsc_\lambda$. We denote the first term on the right hand side of Eq.~(\ref{eq:optval1}) as \textit{Term1} and the second term as \textit{Term2}.

As $\dsc$ is optimal, $\{b_1^{\dsc}, \ldots, b_{\lambda - 1}^{\dsc}\}$ is the optimal way to discretize into $\lambda - 1$ bins the values $X \leq l_{\lambda}^{\dsc}$. In other words, these bins are the solution to Eq.~(\ref{eq:opt1}) w.r.t.\ $\lambda - 1$ and the values $X \leq l_{\lambda}^{\dsc}$. We prove this by contradiction. In particular, we assume that the optimal solution instead is $\{a_1, \ldots, a_{\lambda - 1}\} \neq \{b_1^{\dsc}, \ldots, b_{\lambda - 1}^{\dsc}\}$. Then, it holds that

$\textstyle\sum\limits_{i=1}^{\lambda-1} \sum\limits_{j=1}^k \frac{|C_j, b_i^{\dsc}|}{\size - b_{\lambda}^{\dsc}} h(X' \mid C_j, b_i^{\dsc})$

$\qquad\qquad> \sum\limits_{i=1}^{\lambda-1} \sum\limits_{j=1}^k \frac{|C_j, a_i|}{\size - b_{\lambda}^{\dsc}} h(X' \mid C_j, a_i)$

\noindent Following Eq.~(\ref{eq:optval1}), this means $\{a_1, \ldots, a_t, b_{\lambda}^{\dsc}\}$ is a better way to discretize $X$ minimizing $h(X' \mid I, X)$, which contradicts our assumption on $\dsc$.

Hence, the optimal solution $\dsc$ exhibits optimal substructure. This motivates us to build a \textit{dynamic programming} algorithm to solve our problem.

\vspace{0.5em}
\noindent\textbf{Algorithmic approach.}\ Our method is summarized in Algorithm~\ref{algo:one}. Here, we first form bins $\{a_1, \ldots, a_\beta\}$ (Line~1). We will explain the rationale of this shortly.

Each term $\supp[i] = \sum_{j=1}^i |a_i|$ is the total support of bins $a_1, \ldots, a_i$. We compute such terms from Lines~6 to~8 for later use in Lines~17 and~18. This step takes $O(\beta)$.

Each term $\pref[j][i]$ where $1 \leq j \leq i \leq \beta$ is equal to $h(X' \mid I, \bigcup_{k=j}^{i} a_k)$, i.e.\ $h(X' \mid I)$ computed using data points contained in $\bigcup_{k=j}^{i} a_k$. These terms are analogous to \textit{Term1}. In Lines~9 to~11, we pre-compute them for efficiency purposes. This step in total takes $O(\size \log \size + \size \beta^2)$. Please refer to Appendix~\ref{sec:complex} for the detailed explanation.

Each value $\val[\lambda][i]$ where $\lambda \in [1, \beta]$ and $i \in [\lambda, \beta]$ stands for $h(X' \mid I, X)$ computed by optimally merging (discretizing) initial bins $a_1, \ldots, a_i$ into $\lambda$ bins. $b[\lambda][i]$ contains the resulting bins. Our goal is to efficiently compute $\val[1\ldots\beta][\beta]$ and $b[1\ldots\beta][\beta]$. To do so, from Lines~12 to~14 we first compute $\val[1][1\ldots\beta]$ and $b[1][1\ldots\beta]$. Then from Lines~15 to~22, we incrementally compute relevant elements of arrays $\val$ and $b$, using the recursive relation described in Eq.~(\ref{eq:optval1}). This is standard dynamic programming. Note that in Line~17, term $\frac{\supp[i] - \supp[j]}{\supp[i]} \pref[j+1][i]$ corresponds to \textit{Term1} while term $\frac{\supp[j]}{\supp[i]} \val[\lambda - 1][j]$ corresponds to \textit{Term2}.

The processing from Lines~12 to~22 takes $O(\beta^3)$. As $\beta \ll \size$, our algorithm in total takes $O(\size \log \size + \size \beta^2)$.

\vspace{0.5em}
\noindent\textbf{Remarks.}\ Notice that in our solution, we form initial bins $\{a_1, \ldots, a_\beta\}$ of $X$ where $\beta \ll \size$. Ideally, one would start with $\size$ bins. However, in the extreme case when all realizations of $X$ are distinct, $h(X' \mid I, X)$ will be zero. This is known as the empty space issue~\cite{NonlinearBook}. On the other hand, by pre-partitioning $X$ in to $\beta$ bins, we ensure that there is sufficient data in each bin for a statistically reliable computation. This also helps to boost efficiency. Choosing a suitable value for $\beta$ is a tradeoff between accuracy and efficiency. We empirically study its effect in Appendix~\ref{sec:sensitivity}.

\begin{algorithm}[tb]
\caption{\uds Optimal Discretization}
\label{algo:one}
\begin{algorithmic}[1]

\STATE Create initial bins $\{a_1, \ldots, a_\beta\}$ of $X$

\STATE Create a double array $\supp[1\ldots\beta]$

\STATE Create a double array $\pref[1\ldots\beta][1\ldots\beta]$

\STATE Create a double array $\val[1\ldots\beta][1\ldots\beta]$

\STATE Create an array $b[1\ldots\beta][1\ldots\beta]$ to store bins

\FOR{$i = 1 \rightarrow \beta$}
	\STATE $\supp[i] = \sum_{j=1}^i |a_i|$
\ENDFOR

\FOR{$1 \leq j \leq i \leq \beta$}
	\STATE $\textstyle\pref[j][i] = h(X' \mid I, \bigcup_{k=j}^{i} a_k)$
\ENDFOR

\FOR{$i = 1 \rightarrow \beta$}
	\STATE $b[1][i] = \bigcup_{k=1}^{i} a_k$ and $\val[1][i] = \pref[1][i]$
\ENDFOR

\FOR{$\lambda = 2 \rightarrow \beta$}
	\FOR{$i = \lambda \rightarrow \beta$}
		\STATE $\pos = \arg\min\limits_{j \in [1, i-1]} \Omega(j, i, \lambda)$
		where $\Omega(j, i, \lambda)$\\
		$= \left(\frac{\supp[i] - \supp[j]}{\supp[i]} \pref[j+1][i] + \frac{\supp[j]}{\supp[i]} \val[\lambda-1][j]\right)$
		
		\STATE $\val[\lambda][i] = \Omega(\pos, i, \lambda)$
		
		\STATE Copy all bins in $b[\lambda-1][\pos]$ to $b[\lambda][i]$
		
		\STATE Add $\bigcup_{k=\pos+1}^{i} a_k$ to $b[\lambda][i]$
	\ENDFOR
\ENDFOR

\STATE Return $\val[1\ldots\beta][\beta]$ and $b[1\ldots\beta][\beta]$
\end{algorithmic}
\end{algorithm}

\subsection{Model Selection} \label{sec:model}

We propose a regularization scheme that helps to balance between correlation preservation and robustness. First, we assume that the dimensions of $I$ are respectively discretized into $e_1, \ldots, e_{|I|}$ bins. We pick the best number of bins $\lambda^{*}$ as follow:
\begin{equation} \label{eq:regularized}
\textstyle\lambda^{*} = \arg\min\limits_{\lambda \in [1, \beta]} \frac{h(X' \mid I, X)}{h(X')} + \frac{H(I, X)}{\log \beta + \sum\limits_{i=1}^{|I|} \log e_i}
\end{equation}
where $\lambda$ is the number of bins that $X$ is discretized into and $H(I, X)$ is the joint entropy of dimensions in $I$ and \textit{discretized} $X$.
In short, when $\lambda$ is small, the first term of Eq.~(\ref{eq:regularized}) is large while the second term is small. Conversely, when $\lambda$ is large, the first term is small while the second term is large. The optimal $\lambda^{*}$ yields the best balance between the two terms, i.e.\ the best balance between the cost of the model and the cost of the data given the model. This will help avoiding choosing many bins when there is no real structure.



\section{Related Work} \label{sec:rl}

Correlation analysis traditionally deals with two random variables. For this pairwise setting, prominent measures include Pearson's correlation, Spearman's correlation, Hilbert-Schmidt independence criterion~\cite{gretton:hsic}, distance correlation~\cite{dcor}, mutual information~\cite{cover:06:elements}, and maximal information coefficient~\cite{ReshefEtAl2011}.

To discover multivariate correlations in multi-dimensional data, recently multivariate measures have been proposed. Total correlation~\cite{wata:info,han:info} is defined based on Shannon entropy. It is a generalization of mutual information to the multivariate setting. It however tends to give higher dimensional subspaces larger scores, regardless if correlations in such subspaces are strong~\cite{cover:06:elements}.

Cumulative mutual information (\cmi)~\cite{nguyen:cmi} which uses cumulative entropy is designed specifically for real-valued data. Like total correlation, \cmi is biased towards high dimensional subspaces (see Section~\ref{sec:cmidrawback}).

Quadratic measures of dependency~\cite{rao:measure,seth:measure,nguyen:ipd} permit empirical computation in closed form. They closely follow the correlation model in Eq.~(\ref{eq:corr}) and define their scores using multivariate joint distributions. Thus, they are susceptible to the curse of dimensionality. Further, they lack a formal normalization scheme to address universality.

Recently, Keller et al.~\cite{keller:hics} propose \hics in the related problem setting. To compute correlation of $X_{1, \ldots, d}$, \hics averages over multiple random runs of the form $\diff\left(p(X_i), p(X_i \mid \{X_{1, \ldots, d}\} \setminus \{X_i\})\right)$ where $X_i$ is selected \textit{randomly} in each run. This causes two issues. First, \hics scores are non-deterministic, making subspace search results potentially unpredictable. Second, by using conditional distributions with $d - 1$ conditions, \hics is also prone to high dimensionality issue.

In earlier work, we proposed \mac~\cite{nguyen:mac} -- a normalized form of total correlation. The score is based on Shannon entropy over discretized data which \mac obtains by optimizing w.r.t.\ cumulative entropy. With \uds we stay closer to the source, as we define and optimize our score using just cumulative entropy; we only use Shannon entropy to choose the number of bins over which to report the score. Further, whereas \mac needs to optimize the dimension order, \uds avoids this and scales better.

\section{Experiments} \label{sec:exp}

In this section, we empirically evaluate \uds. In particular, we first study its statistical power on synthetic data sets. Second, as common in subspace search we plug \uds to existing search algorithms~\cite{keller:hics,nguyen:4s} to mine correlated subspaces. We evaluate output subspaces both quantitatively using clustering and outlier detection, as well as qualitatively.

We compare to \cmi~\cite{nguyen:cmi}, \mac~\cite{nguyen:mac}, and \hics~\cite{keller:hics}. As further baseline, we include \udsr, a variant of \uds that does \textit{not} use regularization. For each competitor, we optimize parameter settings according to their respective papers. For \uds and \udsr, the default setting is $\beta = 20$. Like~\cite{ReshefEtAl2011,nguyen:ipd,nguyen:mac}, we form initial bins $\{a_1, \ldots, a_\beta\}$ by applying equal-frequency binning. We implemented \uds in Java, and make our code available for research purposes.\!\footnote{\codeurl} All experiments were performed single-threaded on an Intel(R) Core(TM) i7-4600U CPU with 16GB RAM. We report wall-clock running times.

\subsection{Statistical Power}

To verify the suitability of \uds to correlation analysis, we first perform statistical tests using synthetic data sets. Here, the null hypothesis is that the data dimensions are statistically independent. To determine the cutoff for testing the null hypothesis, we first generate 100 data sets with \textit{no} correlation. Next, we compute their correlation scores and set the cutoff according to the significance level $\alpha = 0.05$. We then generate 100 data sets with correlation. The power of the measure is the proportion of the 100 new data sets whose correlation scores exceed the cutoff.

We generate each data set with correlation as follow. Let $l = \dima / 2$ where $\dima$ is the desired dimensionality. We generate $\Xb_{l \times 1} = \mathbf{A}_{l \times l} \times \mathbf{Z}_{l \times 1}$ where $Z_i \sim \textit{Gaussian}(0, 1)$ and $\mathbf{A}_{l \times l}$ is fixed with $a_{ij}$ initially drawn from $\textit{Uniform}[0, 1]$. Here, $\Xb_{l \times 1}$ and $\mathbf{Z}_{l \times 1}$ are two vectors, each having $l$ dimensions. We let $\{\X_1, \ldots, \X_l\} = \Xb_{l \times 1}$. Next, we generate $\mathbf{W}_{l \times 1} = \mathbf{B}_{l \times l} \times \Xb_{l \times 1}$ where $\mathbf{B}_{l \times l}$ is fixed with $b_{ij}$ initially drawn from $\textit{Uniform}[0, 0.5]$. Then, using a function $f$ we generate $\X_{i + l} = f(W_i) + e_i$ where $i \in [1, l]$, and $e_i \sim \textit{Gaussian}(0, \sigma)$; we control noise by varying $\sigma$. We use four instantiations of $f$:

$f_1(x) = 2x + 1, \qquad\qquad f_2(x) = x^2 - 2x,$

$f_3(x) = \log(|x| + 1), \qquad f_4(x) = \sin(2x).$

\noindent That is, we test with both linear and non-linear correlations.

To study universality, we test with four cases: 1) data sets with as well as without correlation have the same dimensionality $\dima$; 2) those with correlation have dimensionality $\dima$ while those without have dimensionality $\dima + e$ where $e$ is the number of extra dimensions; 3) those with correlation have dimensionality $\dima + e$ while those without have dimensionality $\dima$; 4) those with as well as without correlation have arbitrary dimensionality. We find 2) and 3) to yield very similar results; hence, we report results of 2) only. For brevity, we further postpone the results of 4) to Appendix~\ref{sec:extraresults}.

The results for case 1) are in Figure~\ref{fig:power}. Here, we set $\size = 4000$ and vary $\dima$. The results for case 2) are in Figure~\ref{fig:power_extra}. Here, we fix $\size = 4000$ and $\dima = 20$, and vary $e$.

\begin{figure}[tb]
\centering
\subfigure[Power on $f_1$]
{{\includegraphics[width=0.23\textwidth]{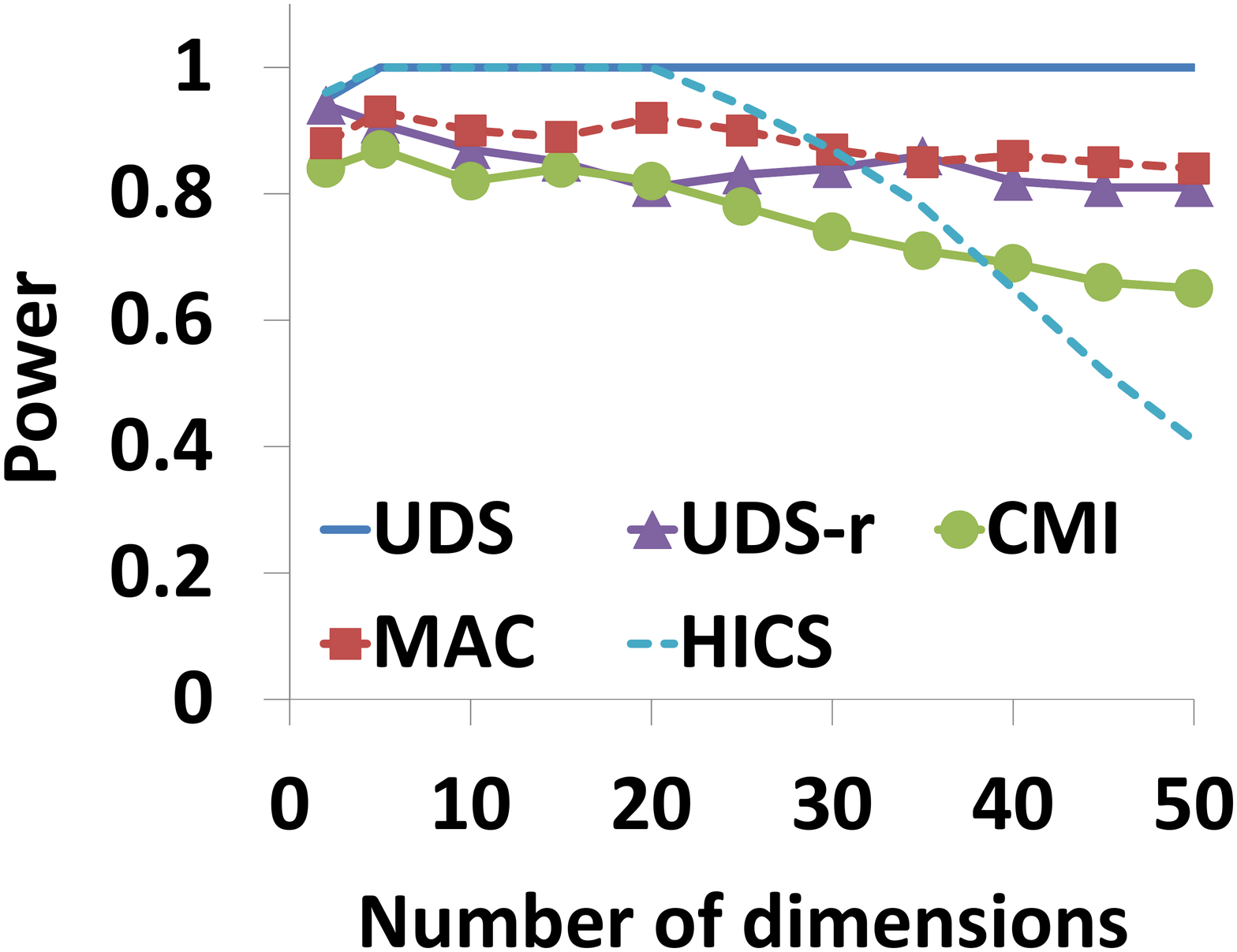}}\label{fig:power_f1}}~
\subfigure[Power on $f_2$]
{{\includegraphics[width=0.23\textwidth]{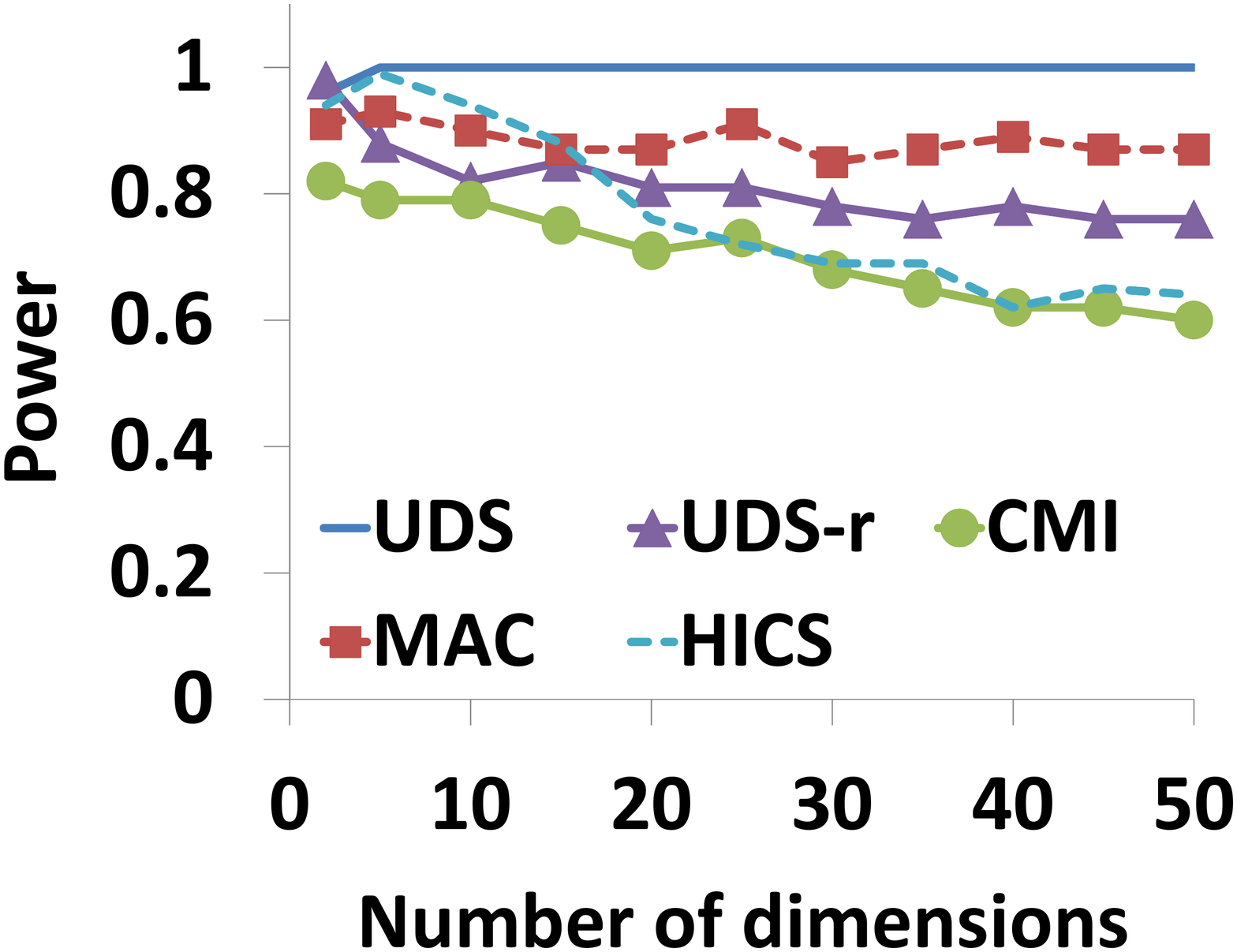}}\label{fig:power_f2}}
\subfigure[Power on $f_3$]
{{\includegraphics[width=0.23\textwidth]{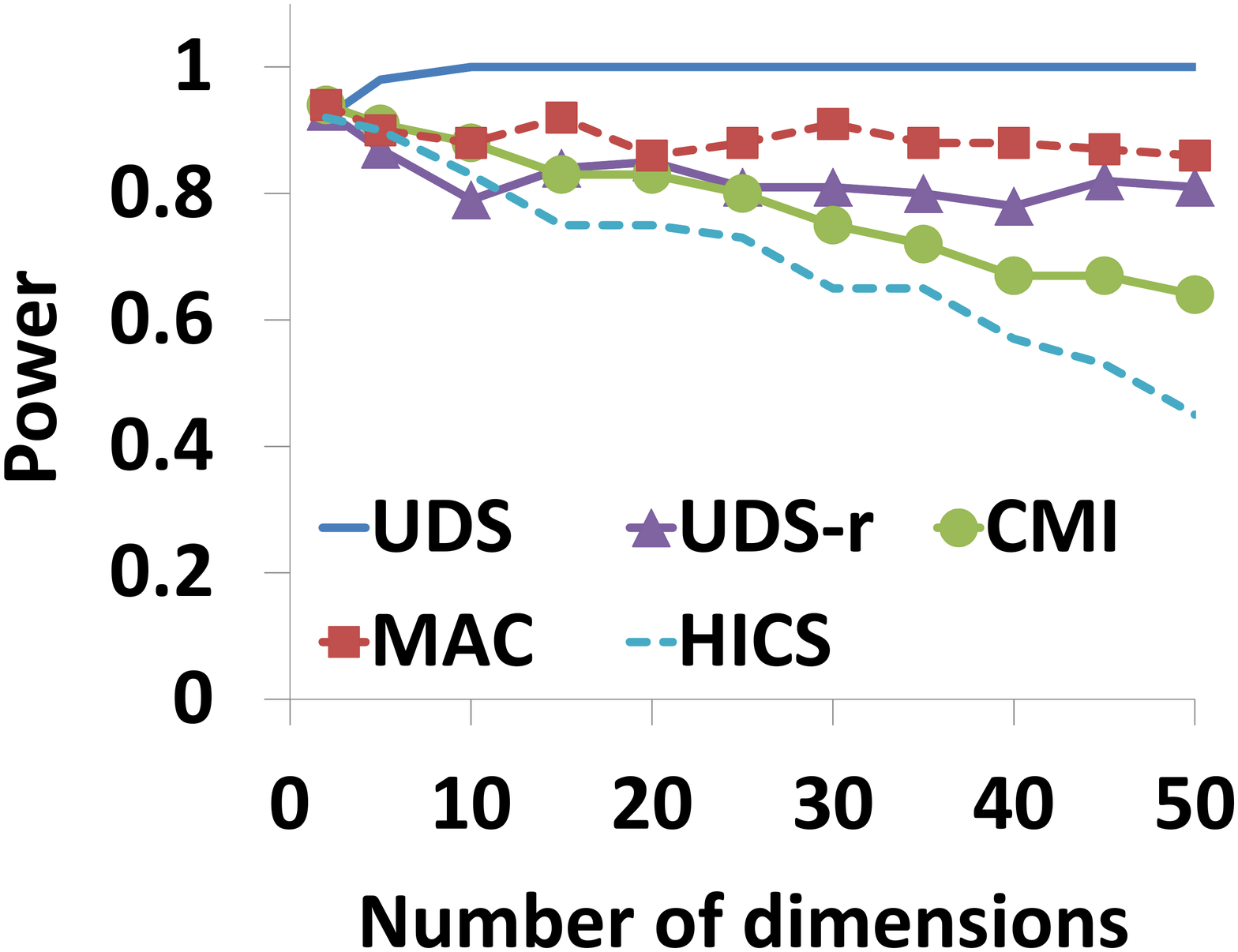}}\label{fig:power_f3}}~
\subfigure[Power on $f_3$]
{{\includegraphics[width=0.23\textwidth]{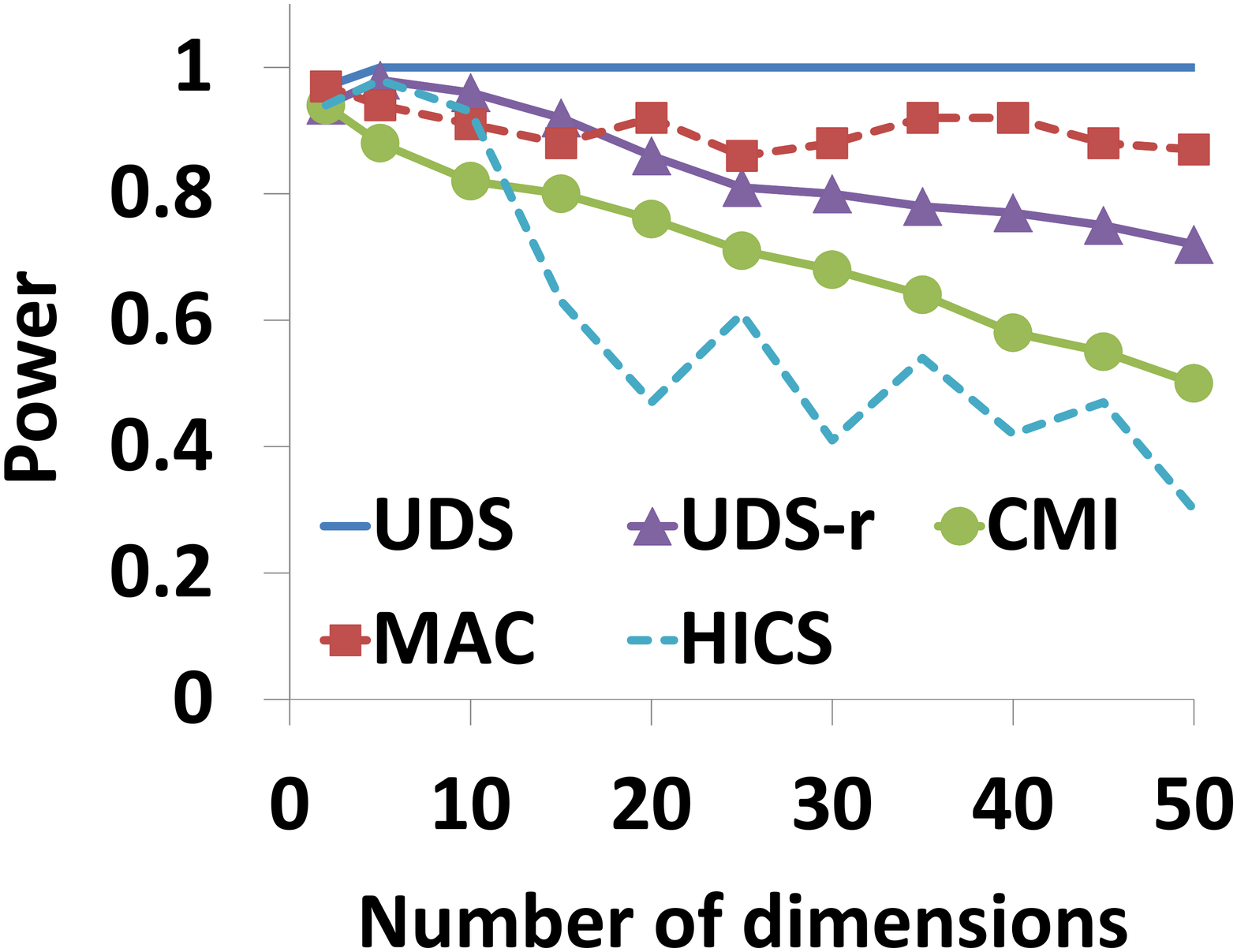}}\label{fig:power_f4}}
\caption{[Higher is better] Statistical power on synthetic data sets for the setting where $\size = 4000$ and $\dima$ is varied.} \label{fig:power}
\end{figure}

\begin{figure}[tb]
\centering
\subfigure[Power on $f_1$]
{{\includegraphics[width=0.23\textwidth]{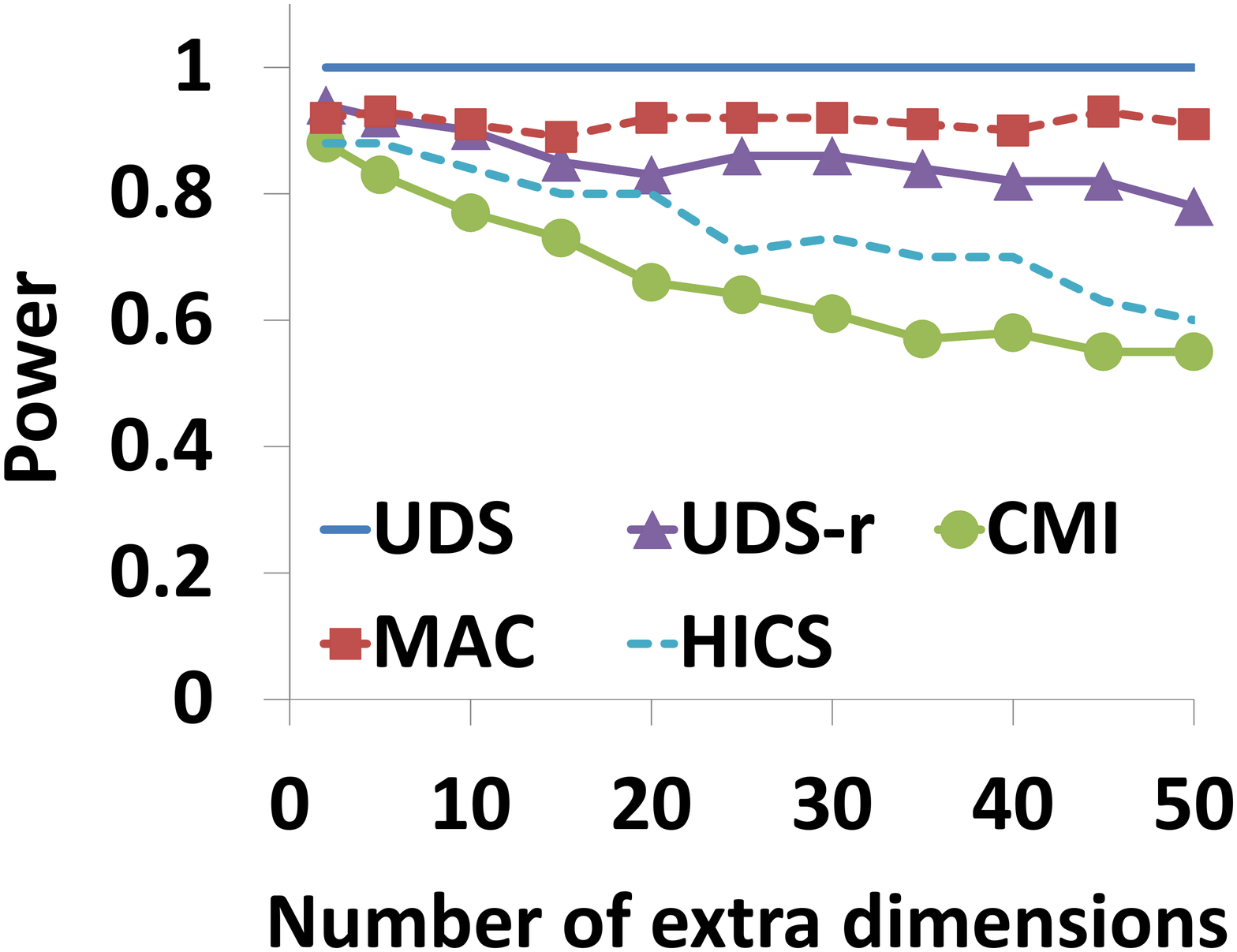}}\label{fig:power_f1_extra}}~
\subfigure[Power on $f_2$]
{{\includegraphics[width=0.23\textwidth]{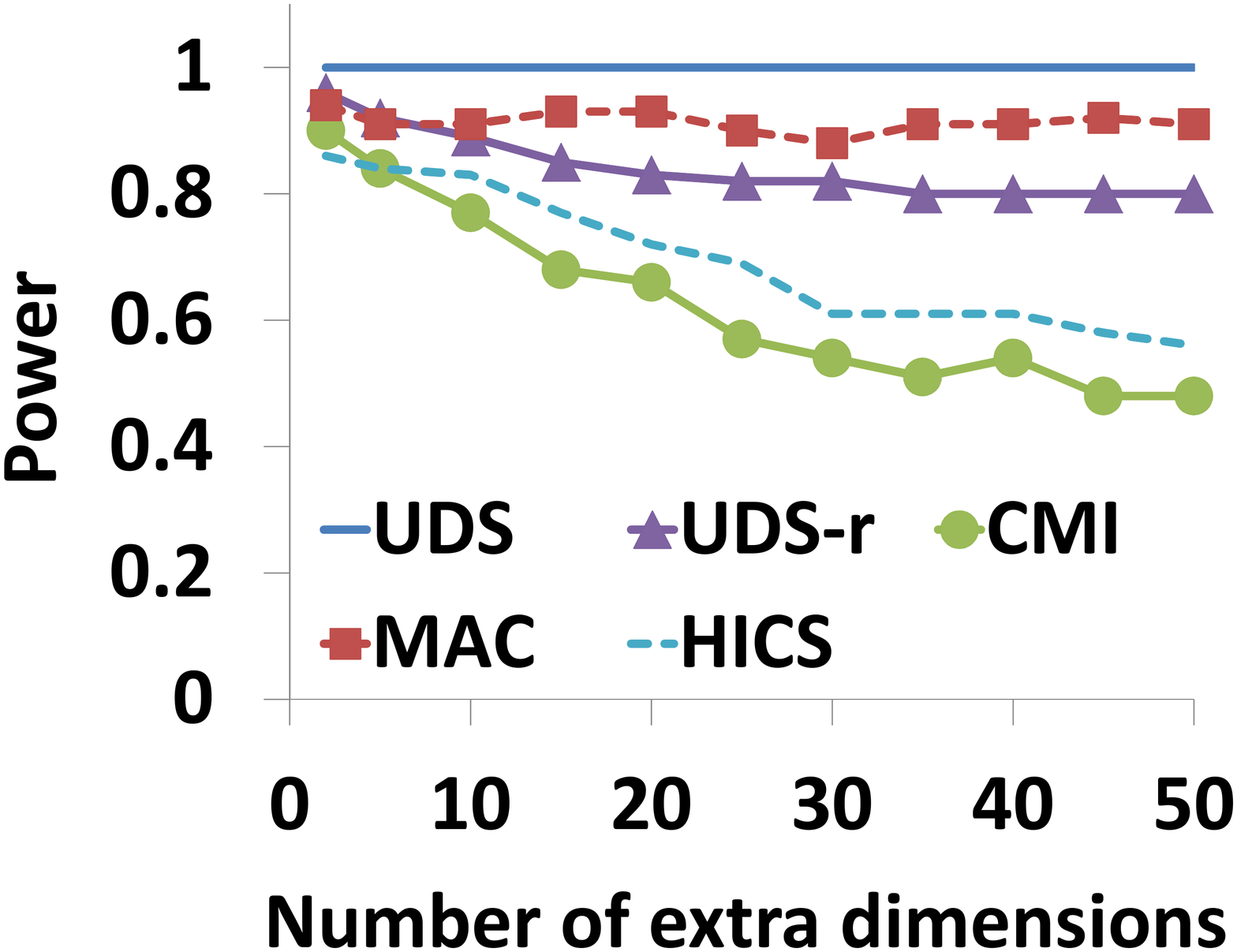}}\label{fig:power_f2_extra}}
\subfigure[Power on $f_3$]
{{\includegraphics[width=0.23\textwidth]{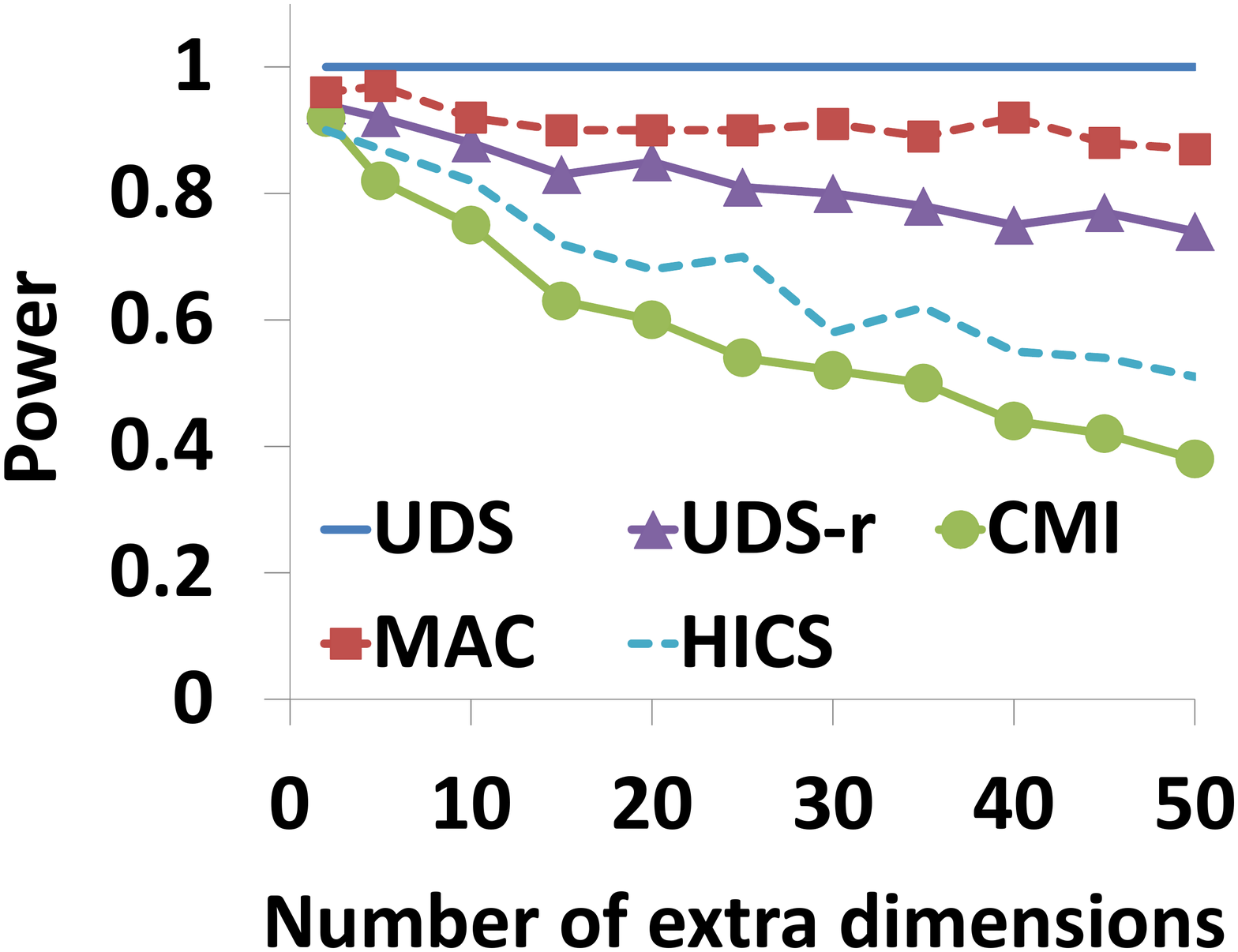}}\label{fig:power_f3_extra}}~
\subfigure[Power on $f_3$]
{{\includegraphics[width=0.23\textwidth]{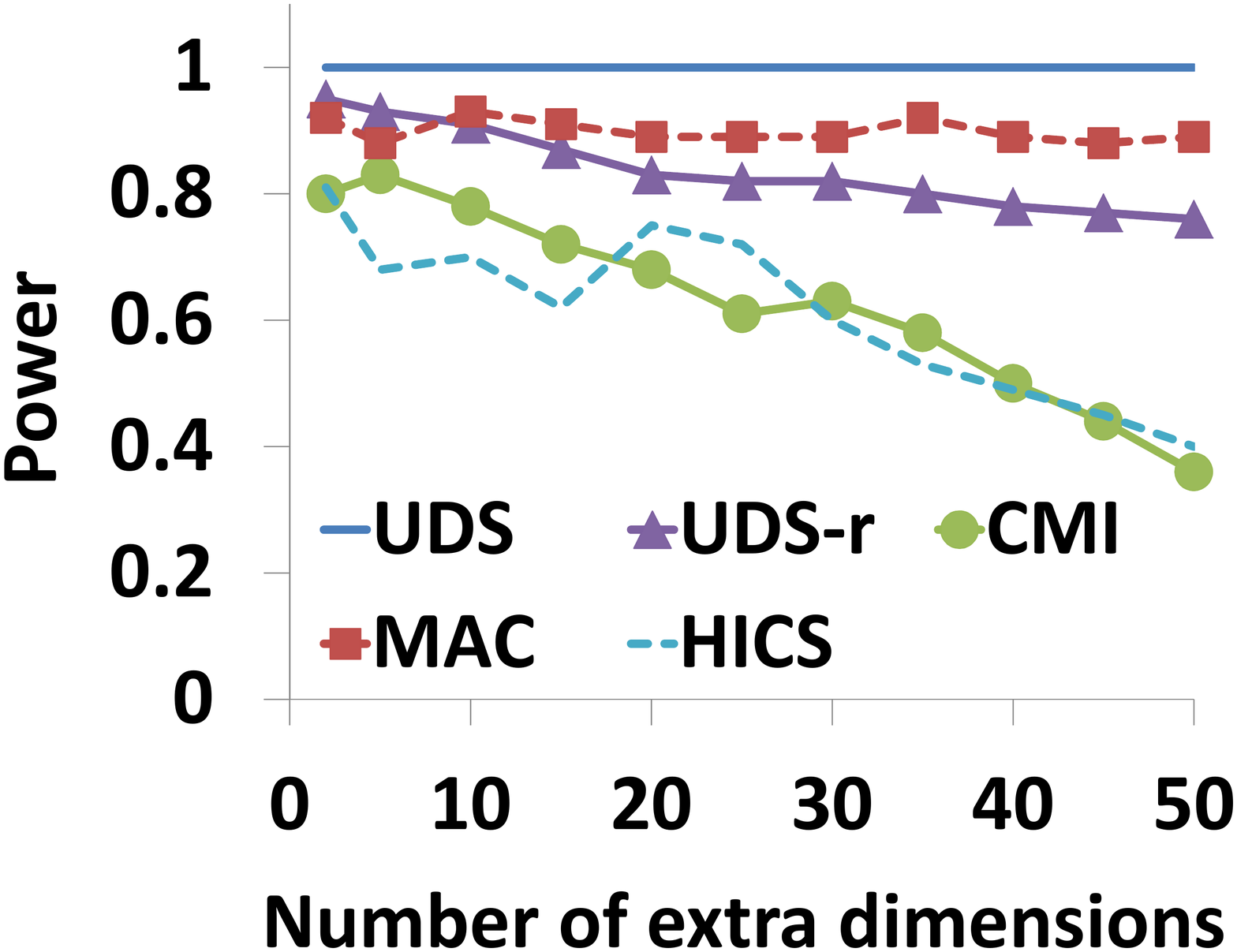}}\label{fig:power_f4_extra}}
\caption{[Higher is better] Statistical power on synthetic data sets for the setting where $\size = 4000$, $\dima = 20$, and $e$ is varied.} \label{fig:power_extra}
\end{figure}

Going over all results, we see that \uds consistently achieves the best performance in all cases. Moreover, it has from almost perfect to perfect statistical power across different values of data size $\size$, dimensionality $\dima$, and the number of extra dimensions $e$. We outperform \mac most likely because we consistently stick to cumulative entropy instead of transitioning between this entropy notion and Shannon entropy. The clear margin of \uds over \udsr shows the importance of our model selection step (cf., Section~\ref{sec:model}). \hics does not perform so well in high dimensionality perhaps due to its use of high dimensional conditional distributions.

Regarding efficiency, \uds is much faster than \mac and on par with \udsr, \cmi, and \hics. More details are in Appendix~\ref{sec:efficiencyresults}. As \uds clearly outperforms \udsr, we skip \udsr in the following.

\subsection{Quality of Subspaces -- Quantitative Results}

Here we plug all methods into beam search~\cite{keller:hics} to find correlated subspaces. As common in subspace search~\cite{cheng:enclus,nguyen:cmi,nguyen:mac}, we evaluate quality of output subspaces by each method through clustering, which tends to yield meaningful results on subspaces with high correlations~\cite{cheng:enclus,mueller:evaluating}.

For each method, we apply \dbscan~\cite{ester:dbscan} -- a well-known clustering technique -- \textit{on top} of its output subspaces. We follow~\cite{assent:dusc} to aggregate the results of all subspaces. We experiment with 6 real labeled data sets from UCI Repository, regarding their class labels as ground truth. As performance metric, we use F1 measure. The results are in Table~\ref{tab:clustering}. We see that \uds performs very well, achieving the best F1 scores on all data sets. This implies that \uds finds better correlated subspaces that help \dbscan to more accurately discover true clusters.

To further evaluate quality of subspaces found by each method, we also perform outlier detection. The results, put in Appendix~\ref{sec:outlierresults}, show that \uds also outperforms all competitors.

\begin{table}[tb]
\centering 
\begin{tabular}{lrrrr}
\toprule
Data & {\bf \uds} & {\bf \cmi} & {\bf \mac} & {\bf \hics}\\
\otoprule

Optical & \textbf{0.61} & 0.40 & 0.48 & 0.36\\

Leaves & \textbf{0.70} & 0.52 & 0.61 & 0.45\\

Letter & \textbf{0.82} & 0.64 & \textbf{0.82} & 0.49\\

PenDigits & \textbf{0.85} & 0.72 & \textbf{0.85} & 0.71\\

Robot & \textbf{0.54} & 0.33 & 0.46 & 0.21\\

Wave & \textbf{0.50} & 0.24 & 0.38 & 0.18\\

\midrule
		
Average & \textbf{0.67} & 0.48 & 0.60 & 0.40\\

\bottomrule
\end{tabular}
\caption{[Higher is better] Clustering results (F1 scores) on real-world data sets.} \label{tab:clustering}
\end{table}

\subsection{Quality of Subspaces -- Qualitative Results}

To evaluate the efficacy of \uds in exploratory analysis, we apply it on two real data sets: Communities \& Crime from demographic domain~\cite{redmond:crime} and Energy from architecture domain~\cite{wagner:building}. As these data sets are unlabeled, we cannot assess clustering quality as before. We instead perform subspace search to detect correlated subspaces, and investigate the discovered correlations. We present some interesting correlations discovered by \uds. All reported correlations are significant at $\alpha = 0.05$ following the testing procedure in~\cite{ReshefEtAl2011}.

On Communities \& Crime, \uds finds a multivariate correlation among \% of people in the community with high education, \% with low education, \% employed as worker, and \% employed as manager. This correlation is intuitively understandable. Surprisingly, it is \textit{not} detected by methods other than \uds and \mac. For exposition, we plot some of its 2-D projections in Figure~\ref{fig:demo}. For each correlation pattern in this figure, we plot the function that best fit it -- in term of $R^2$. Two out of three functions are polynomials of degree 5, implying the respective correlation patterns are non-linear.

On Energy, \uds identified a multivariate correlation between outdoor temperature, indoor CO$_2$ concentration, heating consumption, and drinking water consumption. We plot some of its 2-D projections in Figure~\ref{fig:climate}. The correlation patterns there range from linear (Figure~\ref{fig:co2_water}) to non-linear (Figures~\ref{fig:heating_water} and~\ref{fig:outdoor_water}). They are also intuitively understandable. Interestingly, no competitor including \mac can detect all of them. For instance, \mac is able to identify the first pattern only. This could be attributed to the fact that \mac computes Shannon entropy over discretized data that it obtains by optimizing w.r.t.\ cumulative entropy.

\begin{figure*}[tb]
\centering
\subfigure[\% high education vs.\ \% employed as worker]
{{\includegraphics[width=0.3\textwidth]{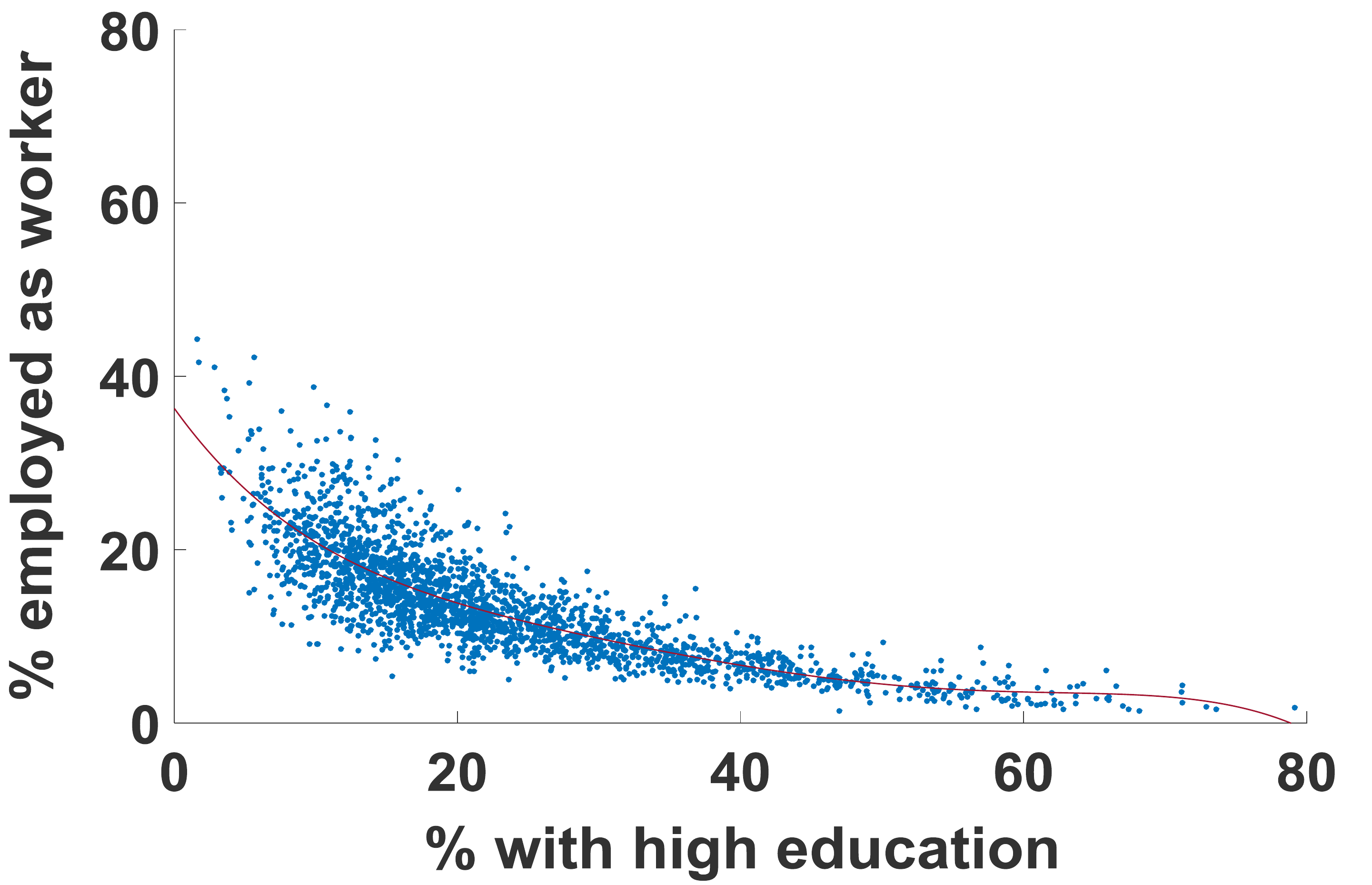}}\label{fig:bach_worker}}~
\subfigure[\% high education vs.\ \% employed as manager]
{{\includegraphics[width=0.3\textwidth]{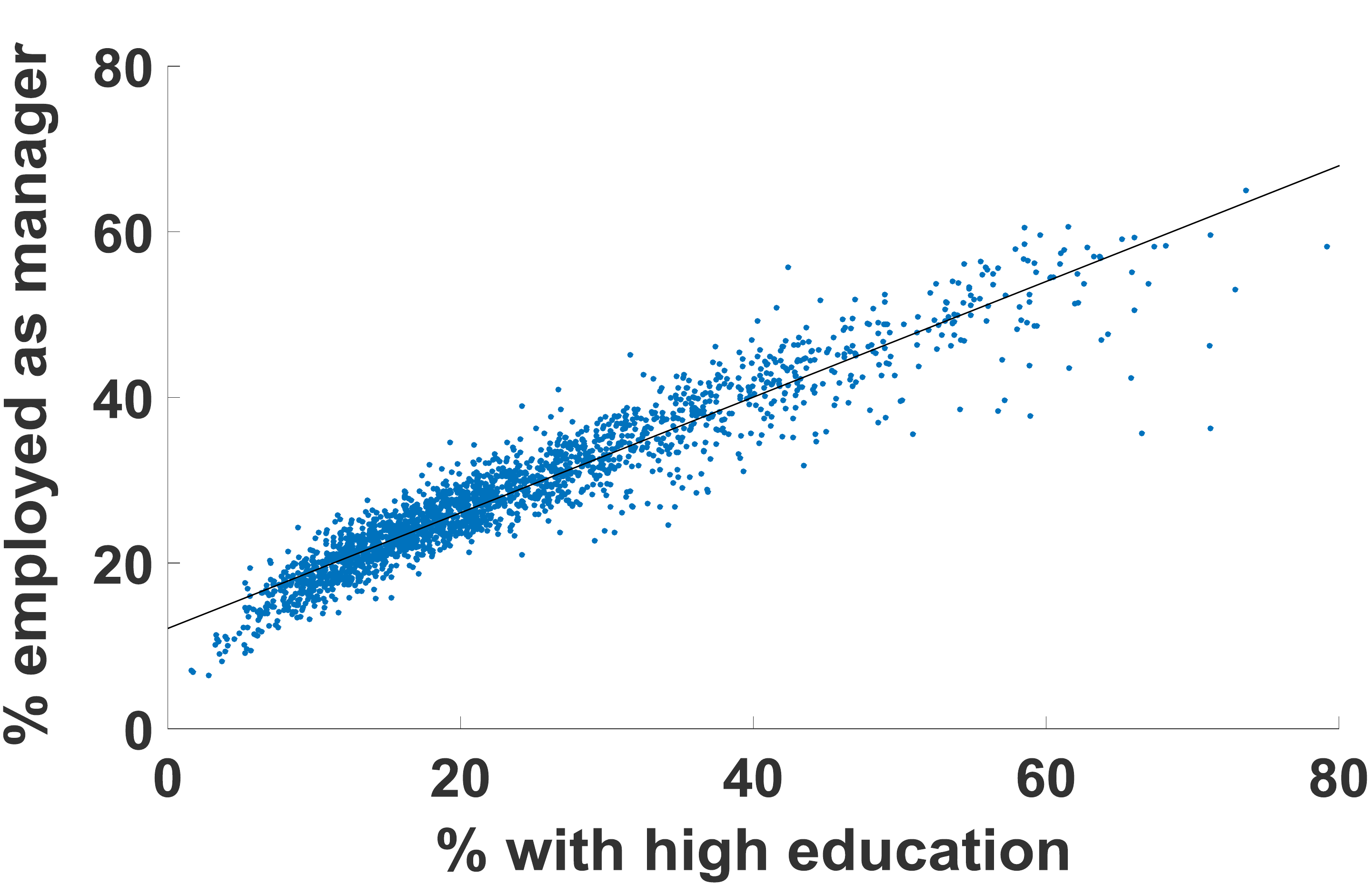}}\label{fig:bach_mng}}~
\subfigure[\% low education vs.\ \% employed as manager]
{{\includegraphics[width=0.3\textwidth]{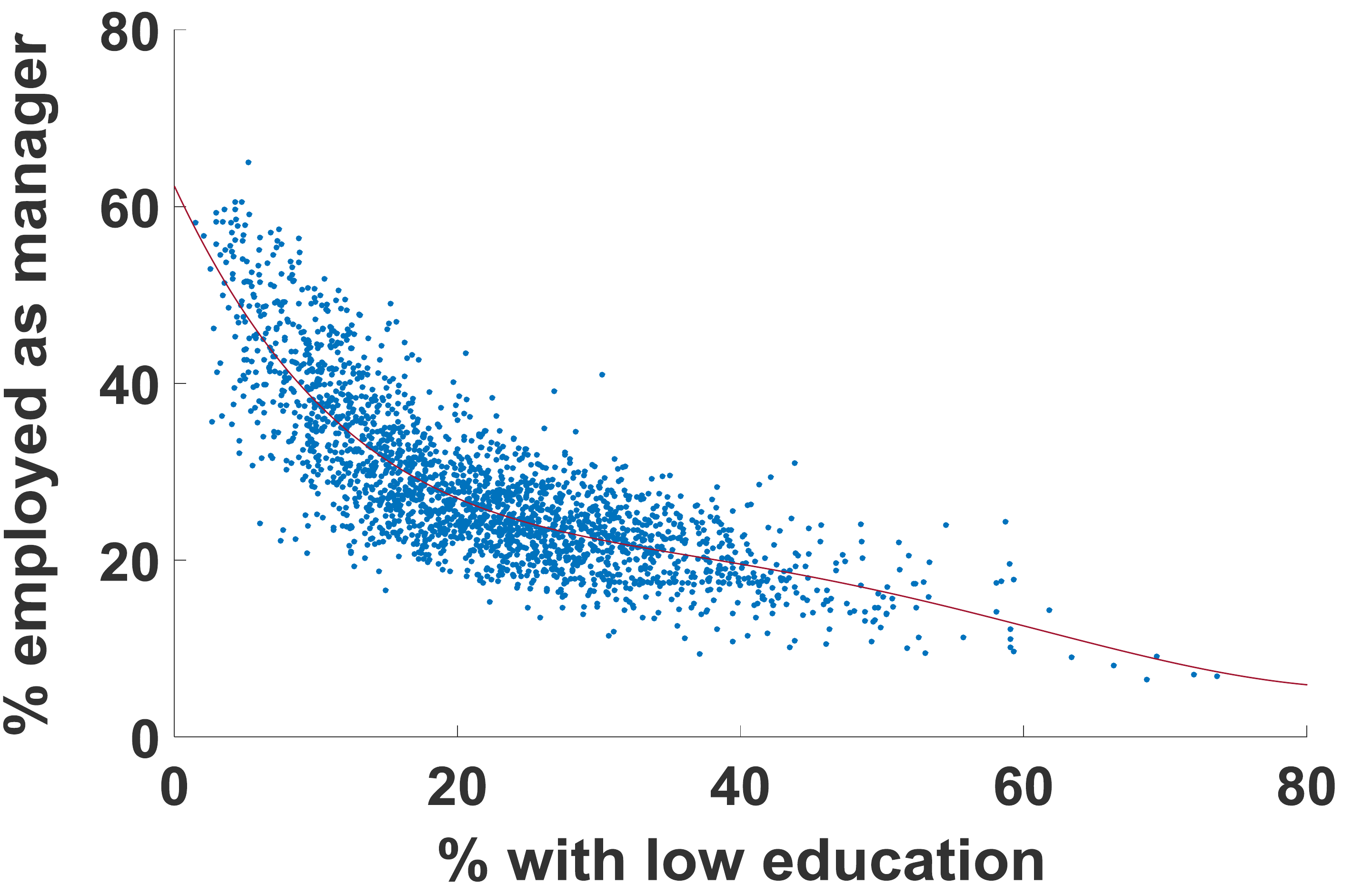}}\label{fig:low_mng}}
\caption{Examples of correlation found by \uds on demographic data. Of the competitors, only \mac can discover these correlation patterns.} \label{fig:demo}
\end{figure*}

\begin{figure*}[tb]
\centering
\subfigure[water consumption vs.\ CO$_2$ concentration]
{{\includegraphics[width=0.3\textwidth]{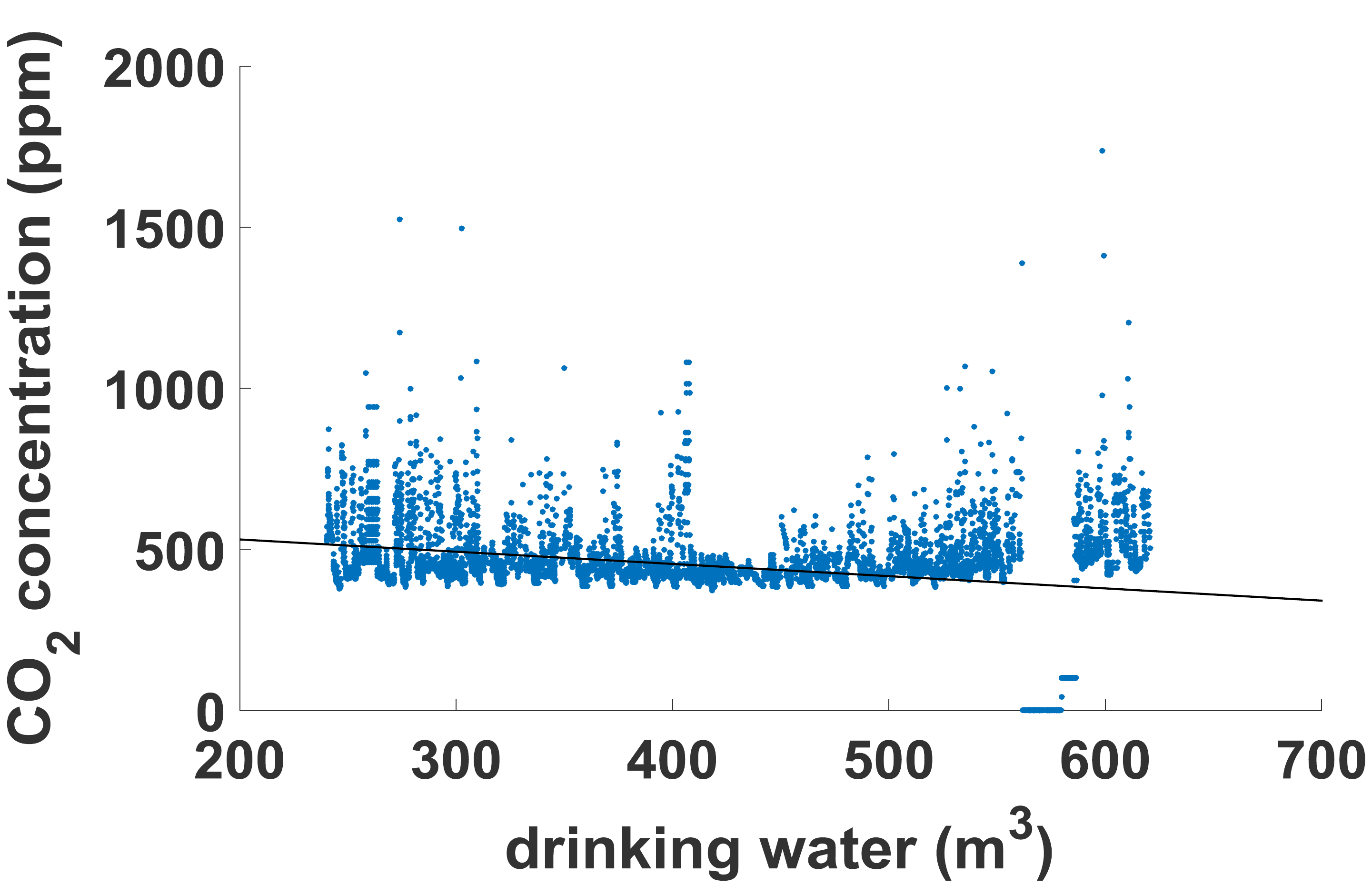}}\label{fig:co2_water}}~
\subfigure[heating consumption vs.\ water consumption]
{{\includegraphics[width=0.3\textwidth]{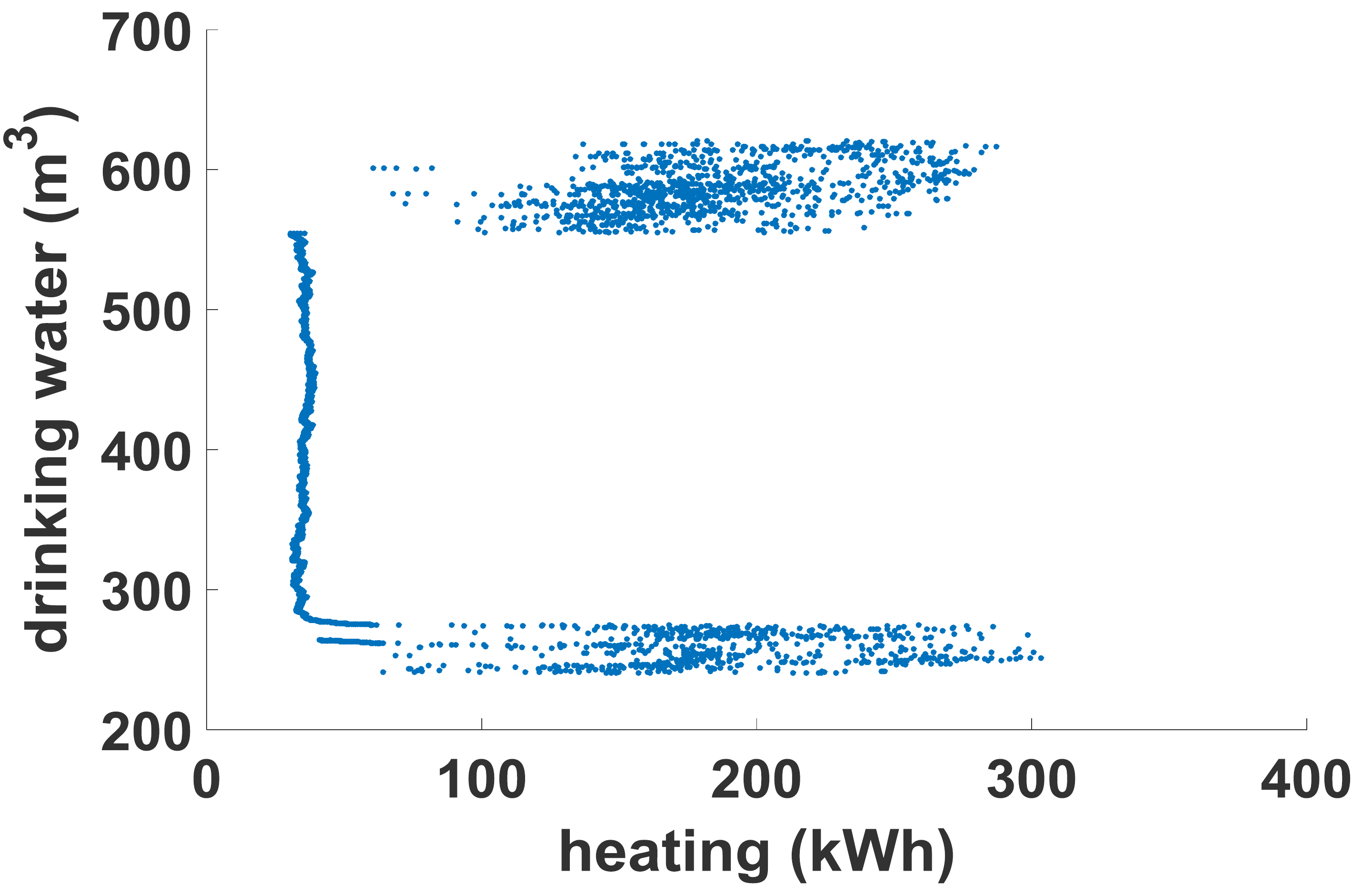}}\label{fig:heating_water}}~
\subfigure[water consumption vs.\ outdoor temperature]
{{\includegraphics[width=0.3\textwidth]{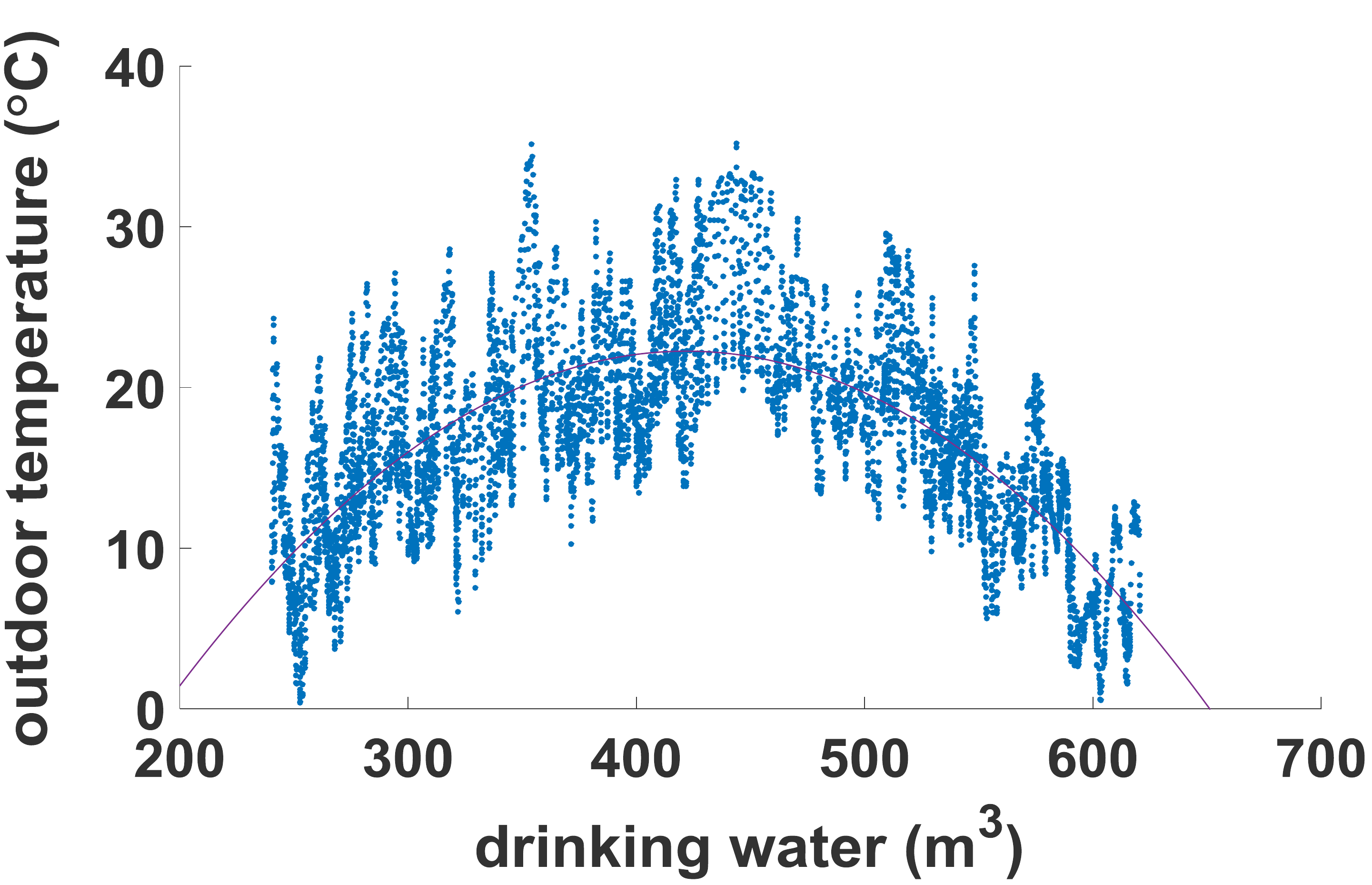}}\label{fig:outdoor_water}}
\caption{Examples of correlation found by \uds on climate data. Of the competitors, only \mac can discover the first correlation pattern.} \label{fig:climate}
\end{figure*}

\section{Conclusion} \label{sec:con}

In this paper, we studied the problem of universally, non-parametrically, and efficiently assessing subspace correlations in multivariate data. By universal, we mean that 1) we are able to capture correlation in subspaces of any dimensionality and 2) we allow comparison of correlation scores across different subspaces -- regardless how many dimensions they have and what specific statistical properties their dimensions possess. To address all issues, we proposed \uds. In short, we defined \uds based on cumulative entropy. We fulfilled universality by introducing a principled normalization scheme to bring \uds scores across different subspaces to the same domain. We presented a non-parametric and efficient method to compute \uds on empirical data. Extensive experiments showed that \uds outperformed state of the art in both statistical power and subspace search.

\section*{Acknowledgements}
The authors are supported by the Cluster of Excellence ``Multimodal Computing and Interaction'' within the Excellence Initiative of the German Federal Government.

\bibliographystyle{abbrv}
\bibliography{bib/citation}

\ifapx
\appendix
\section{Proofs} \label{sec:proofs}

\begin{proof}[Lemma~\ref{lem:cmibias}]
By definition, we have
\begin{align*}
& \textstyle\cmi(X_{1, \ldots, d}) \\
& = \max\limits_{\sigma \in \mathcal{F}_d} \sum\limits_{i=2}^d h(X_{\sigma(i)}) - h(X_{\sigma(i)} \mid X_{\sigma(1), \ldots, \sigma(i-1)}).
\end{align*}
For each $\sigma \in \mathcal{F}_d$, 
\begin{align*}
& \sum\limits_{i=2}^d h(X_{\sigma(i)}) - h(X_{\sigma(i)} \mid X_{\sigma(1), \ldots, \sigma(i-1)}) \\
& \leq \sum\limits_{i=2}^d h(X_{\sigma(i)}) - h(X_{\sigma(i)} \mid X_{\sigma(1), \ldots, \sigma(i-1)})\\
& \qquad \qquad + h(X_{d+1}) - h(X_{d+1} \mid X_{\sigma(1), \ldots, \sigma(i)}).
\end{align*}
This holds because from Theorem~\ref{theo:cenonneg},
$$h(X_{d+1}) \geq h(X_{d+1} \mid X_{\sigma(1), \ldots, \sigma(i)}).$$
Hence, for each $\sigma \in \mathcal{F}_d$,
\begin{align*}
& \sum\limits_{i=2}^d h(X_{\sigma(i)}) - h(X_{\sigma(i)} \mid X_{\sigma(1), \ldots, \sigma(i-1)}) \\
& \leq \max\limits_{\sigma \in \mathcal{F}_{d+1}} \sum\limits_{i=2}^{d+1} h(X_{\sigma(i)}) - h(X_{\sigma(i)} \mid X_{\sigma(1), \ldots, \sigma(i-1)}).
\end{align*}
In other words, $\cmi(X_{1, \ldots, d}) \leq \cmi(X_{1, \ldots, d + 1})$.
\end{proof}

\begin{proof}[Lemma~\ref{lem:bound1}]
From Theorem~\ref{theo:cenonneg}, it holds that
\begin{align*}
& \sum\limits_{i=2}^d h(X_{\sigma(i)}) - h(X_{\sigma(i)} \mid X_{\sigma(1), \ldots, \sigma(i-1)}) \\
& \leq \sum\limits_{i=2}^d h(X_{\sigma(i)}) \\
& \leq \max\limits_{\sigma \in \mathcal{F}_d} \sum\limits_{i=2}^d h(X_{\sigma(i)}).
\end{align*}
Hence, we arrive at: $\cmi(X_{1, \ldots, d}) \leq \max\limits_{\sigma \in \mathcal{F}_d} \sum\limits_{i=2}^d h(X_{\sigma(i)})$.
\end{proof}

\begin{proof}[Lemma~\ref{lem:bound}]
The result follows from the proof of Lemma~\ref{lem:cmibias}. Based on, Theorem~\ref{theo:cenonneg}, the equality holds iff for each $i \in [2, d]$, $X_{\sigma(i)}$ is a function of $X_{\sigma(1), \ldots, \sigma(i-1)}$. This means that
\begin{itemize}
\item $X_{\sigma(2)}$ is a function of $X_{\sigma(1)}$.
\item $X_{\sigma(3)}$ is a function of $X_{\sigma(1)}$ and $X_{\sigma(2)}$.
\item \ldots
\item $X_{\sigma(d)}$ is a function of $X_{\sigma(1), \ldots, \sigma(d-1)}$.
\end{itemize}
This is equivalent to that $X_{\sigma(2), \ldots, \sigma(d)}$ are functions of $X_{\sigma(1)}$.
\end{proof}

\begin{proof}[Lemma~\ref{lem:udsbound}]
That $0 \leq \uds(X_{1, \ldots, d}) \leq 1$ follows from Theorem~\ref{theo:cenonneg} and Lemma~\ref{lem:bound}.

$\uds(X_{1, \ldots, d}) = 0$ iff for each $\sigma \in \mathcal{F}_d$,
$$\sum\limits_{i=2}^d h(X_{\sigma(i)}) - h(X_{\sigma(i)} \mid X_{\sigma(1), \ldots, \sigma(i-1)}) = 0.$$
This means that for each $i \in [2, d]$, $X_{\sigma(i)}$ is independent of $X_{\sigma(1), \ldots, \sigma(i-1)}$, which is equivalent to $X_{\sigma(1), \ldots, \sigma(d)}$ are independent.

$\uds(X_{1, \ldots, d}) = 0$ iff there exists $\sigma \in \mathcal{F}_d$ such that
$$\sum\limits_{i=2}^d h(X_{\sigma(i)}) - h(X_{\sigma(i)} \mid X_{\sigma(1), \ldots, \sigma(i-1)}) = \sum\limits_{i=2}^d h(X_{\sigma(i)}).$$
Following Lemma~\ref{lem:bound}, this means that $X_{\sigma(2), \ldots, \sigma(d)}$ are functions of $X_{\sigma(1)}$.
\end{proof}

\begin{lemma} \label{lem:pracudsbound}
It holds that:\\
$\bullet$ $0 \leq \uds_{\pr}(X_{1, \ldots, d}) \leq 1$.\\
$\bullet$ $\uds_{\pr}(X_{1, \ldots, d}) = 0$ iff $X_{1, \ldots, d}$ are independent.\\
$\bullet$ $\uds_{\pr}(X_{1, \ldots, d}) = 1$ iff $X_{\sigma(2), \ldots, \sigma(d)}$ are functions of $X_{\sigma(1)}$.
\end{lemma}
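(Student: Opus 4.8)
The plan is to observe that, by Definition~\ref{def:pracuds}, $\uds_{\pr}(X_{1,\ldots,d}) = \Phi_\sigma(X_{1,\ldots,d})$ for the single, fixed permutation $\sigma$ that orders the dimensions so that $h(X_{\sigma(1)}) \geq \cdots \geq h(X_{\sigma(d)})$. Hence, unlike Lemma~\ref{lem:udsbound}, where a maximum over all of $\mathcal{F}_d$ must be handled, here every claim reduces to a statement about $\Phi_\tau$ for one permutation. I would therefore first establish the three properties for $\Phi_\tau$ with an arbitrary fixed $\tau \in \mathcal{F}_d$, and then specialize to $\tau = \sigma$. The only ingredients needed are Theorem~\ref{theo:cenonneg} and Lemma~\ref{lem:bound}, so the argument closely mirrors the proof of Lemma~\ref{lem:udsbound}.

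For the bounds, I would note that each unconditional term $h(X_{\tau(i)})$ is nonnegative, so the denominator $\sum_{i=2}^d h(X_{\tau(i)})$ is nonnegative, and that by Theorem~\ref{theo:cenonneg} each summand $h(X_{\tau(i)}) - h(X_{\tau(i)} \mid X_{\tau(1),\ldots,\tau(i-1)})$ of the numerator is nonnegative, giving $\Phi_\tau \geq 0$. The upper bound $\Phi_\tau \leq 1$ is immediate from Lemma~\ref{lem:bound}, which states exactly that the numerator never exceeds the denominator. Whenever the denominator is zero the convention $\tfrac{0}{0} = 0$ applies, so $\Phi_\tau$ is well defined and still lies in $[0,1]$.

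For the value $0$, I would argue that when the denominator is positive, $\Phi_\tau = 0$ forces the numerator to vanish; since each of its summands is nonnegative, every summand is then zero, so $h(X_{\tau(i)} \mid X_{\tau(1),\ldots,\tau(i-1)}) = h(X_{\tau(i)})$, and Theorem~\ref{theo:cenonneg} gives that each $X_{\tau(i)}$ is independent of the preceding block $X_{\tau(1),\ldots,\tau(i-1)}$. Chaining these conditional independences through the factorization of the joint density yields full joint independence of $X_{1,\ldots,d}$, and the converse is immediate. For the value $1$, $\Phi_\tau = 1$ (which forces a positive denominator) means the numerator equals the denominator, and the equality condition of Lemma~\ref{lem:bound} states precisely that $X_{\tau(2),\ldots,\tau(d)}$ are functions of $X_{\tau(1)}$. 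Setting $\tau = \sigma$ then delivers all three stated properties.

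The one point requiring care — and the main obstacle — is the interaction of a degenerate (zero) denominator with the convention $\tfrac{0}{0} = 0$. Because $\sigma$ orders the dimensions by descending cumulative entropy, a zero denominator forces $h(X_{\sigma(2)}) = \cdots = h(X_{\sigma(d)}) = 0$, so these dimensions are almost surely constant and hence trivially independent of the rest; the convention then sets $\uds_{\pr} = 0$, consistent with the ``$=0$ iff independent'' claim. This same degenerate case is also the only one in which ``$X_{\sigma(2),\ldots,\sigma(d)}$ are functions of $X_{\sigma(1)}$'' does not force the value $1$ (the functions are then all constant and the convention yields $0$), so I would state the value-$1$ equivalence under the generic assumption of a positive denominator, exactly as the analogous Lemma~\ref{lem:udsbound} is argued.
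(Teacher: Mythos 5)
Your proof follows essentially the same route as the paper's: both derive the bounds from Theorem~\ref{theo:cenonneg} and Lemma~\ref{lem:bound}, characterize the value $0$ by forcing each nonnegative summand of the numerator to vanish and chaining the resulting independences, and characterize the value $1$ via the equality condition of Lemma~\ref{lem:bound}. Your extra care with the degenerate zero-denominator case (where the $\tfrac{0}{0}=0$ convention makes the third bullet fail for almost-surely constant dimensions, so the value-$1$ equivalence should be stated under a positive denominator) is a legitimate refinement that the paper's proof silently skips, but it does not constitute a different approach.
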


\begin{proof}
That $0 \leq \uds_{\pr}(X_{1, \ldots, d}) \leq 1$ follows from Theorem~\ref{theo:cenonneg} and Lemma~\ref{lem:bound}.

$\uds_{\pr}(X_{1, \ldots, d}) = 0$ iff
$$\sum\limits_{i=2}^d h(X_{\sigma(i)}) - h(X_{\sigma(i)} \mid X_{\sigma(1), \ldots, \sigma(i-1)}) = 0.$$
This means that for each $i \in [2, d]$, $X_{\sigma(i)}$ is independent of $X_{\sigma(1), \ldots, \sigma(i-1)}$, which is equivalent to $X_{\sigma(1), \ldots, \sigma(d)}$ are independent.

$\uds_{\pr}(X_{1, \ldots, d}) = 1$ iff
$$\sum\limits_{i=2}^d h(X_{\sigma(i)}) - h(X_{\sigma(i)} \mid X_{\sigma(1), \ldots, \sigma(i-1)}) = \sum\limits_{i=2}^d h(X_{\sigma(i)}).$$
Following Lemma~\ref{lem:bound}, this means that $X_{\sigma(2), \ldots, \sigma(d)}$ are functions of $X_{\sigma(1)}$.
\end{proof}

\section{Detailed Complexity Analysis} \label{sec:complex}

We pre-sort the values of $X_1, \ldots, X_\dima$. For each dimension $X$, we maintain a \textit{rank index} structure $\mathbf{RI}_X$ where $\mathbf{RI}_X[i]$ is the row index of the $\mathit{i^{th}}$ smallest value of $X$.

When computing $h(X' \mid I, X)$, we pre-partition $X$ in to bins $\{a_1, \ldots, a_\beta\}$. Recall that $C_1, \ldots, C_k$ are the non-empty hypercubes of $I$. To efficiently compute $\pref[j][i]$ where $1 \leq j \leq i \leq \beta$, we have to make sure that values of $X'$ within each combination $(C_y, a_z)$ for $y \in [1, k]$ and $z \in [1, \beta]$ are already sorted. We achieve this by using $\mathbf{RI}_{X'}$. In particular, we loop through each row index, starting from~0. For each index $i$, we obtain the real row index which is $\mathbf{RI}_{X'}[i]$. Hence, we obtain the respective row. For this row, we retrieve the respective combination $(C_y, a_z)$ that it has. Then, we add $X'$ value of the row to the list of $X'$ values of $(C_y, a_z)$. In this way, we have values of $X'$ sorted in \textit{all} combinations $(C_y, a_z)$.

Now, for each $j \in [1, \beta]$, we run $i$ from $j$ to $\beta$. If $i = j$, $f[j][i]$ is in fact $h(X' \mid I)$ computed using the data points in bin $a_i$. If $i > j$, we compute $f[j][i]$ by merging lists of sorted values $X'$ of $(C_1, a_i), \ldots, (C_k, a_i)$ with the respective list of $(C_1, \bigcup_{z=j}^{i-1} a_z), \ldots, (C_k, \bigcup_{z=j}^{i-1} a_z)$. That is, we merge the list of $(C_y, a_i)$ with that of $(C_y, \bigcup_{z=j}^{i-1} a_z)$. The merge is done efficiently using merge procedure of the well-known merge sort.

Therefore, computing $\pref[j][i]$ where $1 \leq j \leq i \leq \beta$ in total takes $O(\size \log \size + \size \beta^2)$.

\section{Additional Results on Statistical Power} \label{sec:extraresults}

The results for case 4), where data sets with and without correlation have arbitrary dimensionality, are in Figure~\ref{fig:power_mixed}. Here, $\size = 4000$ and the dimensionality of each data set created is randomly drawn from $[2, 50]$. We see that \uds consistently achieves the best statistical power across $f_1$, $f_2$, $f_3$, and $f_4$. This means that it is able to universally quantify correlations of subspaces with different dimensionality.

\begin{figure}[tb]
\centering
\includegraphics[width=0.23\textwidth]{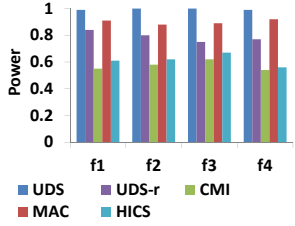}
\caption{[Higher is better] Statistical power on synthetic data sets for the setting where $\size = 4000$ and the dimensionality of each data set created is randomly drawn from $[2, 50]$.} \label{fig:power_mixed} 
\end{figure}

\section{Sensitivity to $\beta$} \label{sec:sensitivity}

To assess sensitivity of \uds to $\beta$, we use the setting of case 1) where data sets with as well as without correlation have the same dimensionality $\dima$. In particular, we generate data sets with $\size = 4000$ and $\dima = 20$. We vary $\beta$ from 5 to 40 with step size being 5. The results are in Figure~\ref{fig:beta}. We see that \uds is very stable for $\beta > 10$. This implies that it allows easy parameterization.

\begin{figure}[tb]
\centering
\subfigure[Power on $f_1$]
{{\includegraphics[width=0.23\textwidth]{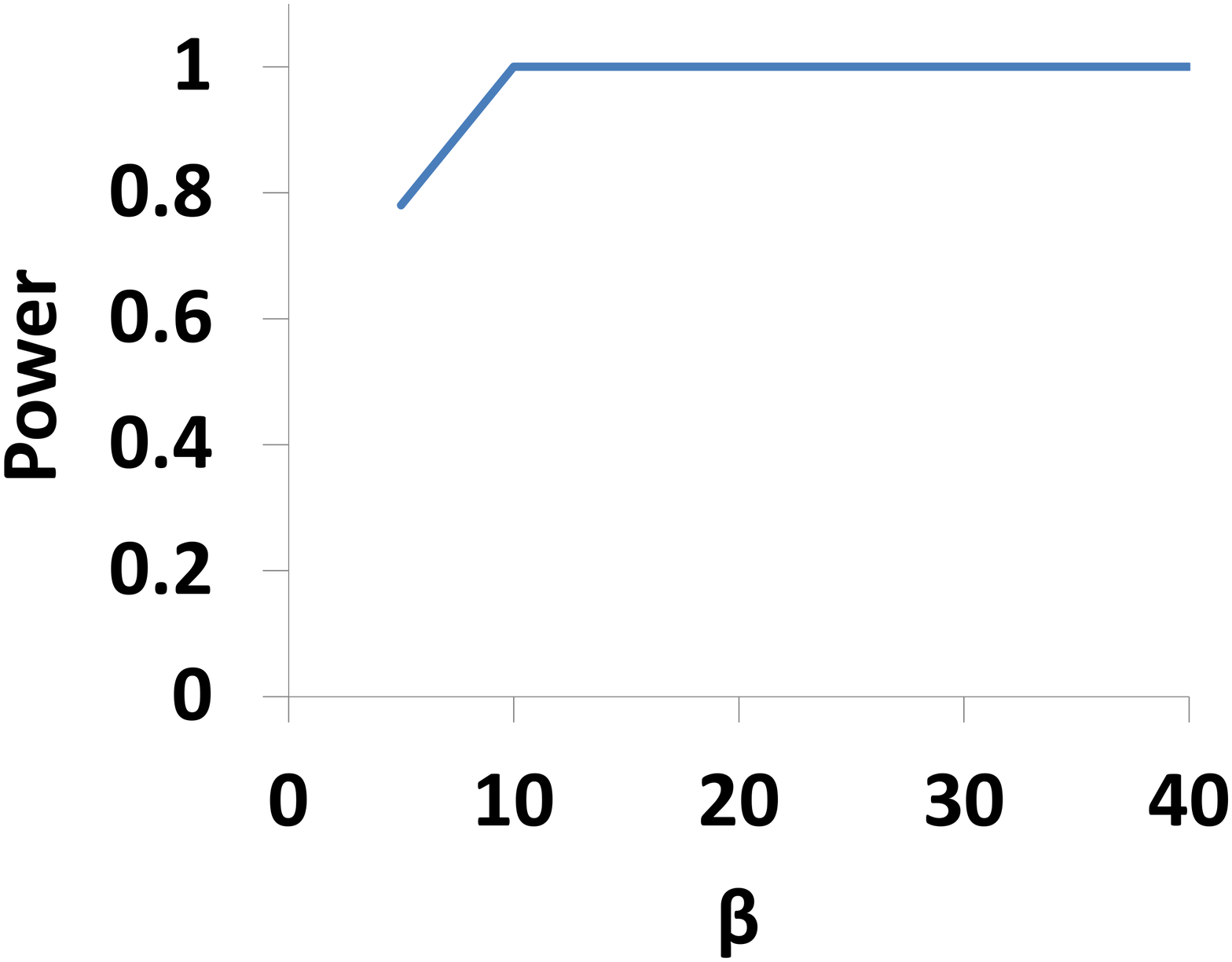}}\label{fig:f1_beta}}~
\subfigure[Power on $f_2$]
{{\includegraphics[width=0.23\textwidth]{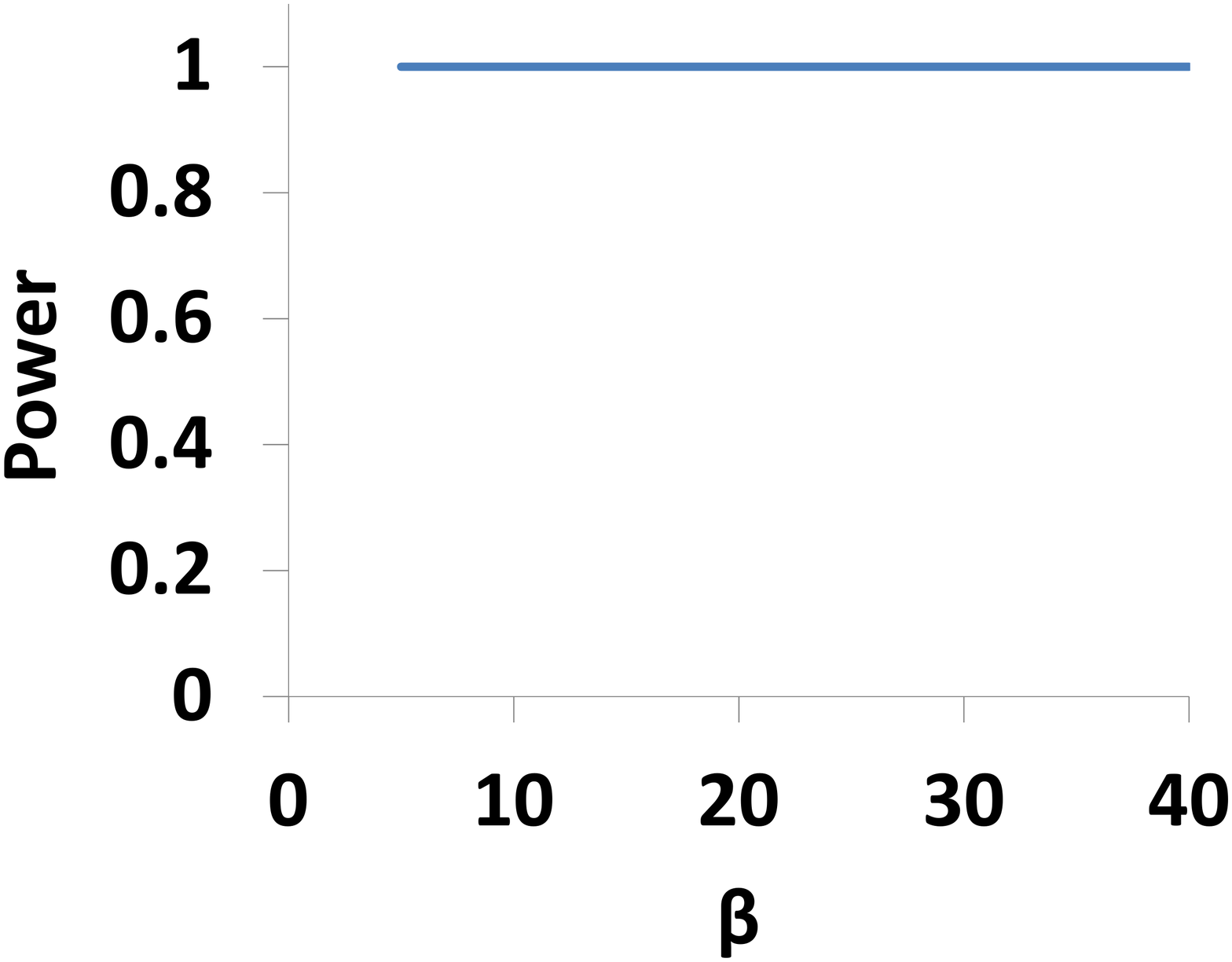}}\label{fig:f2_beta}}
\subfigure[Power on $f_3$]
{{\includegraphics[width=0.23\textwidth]{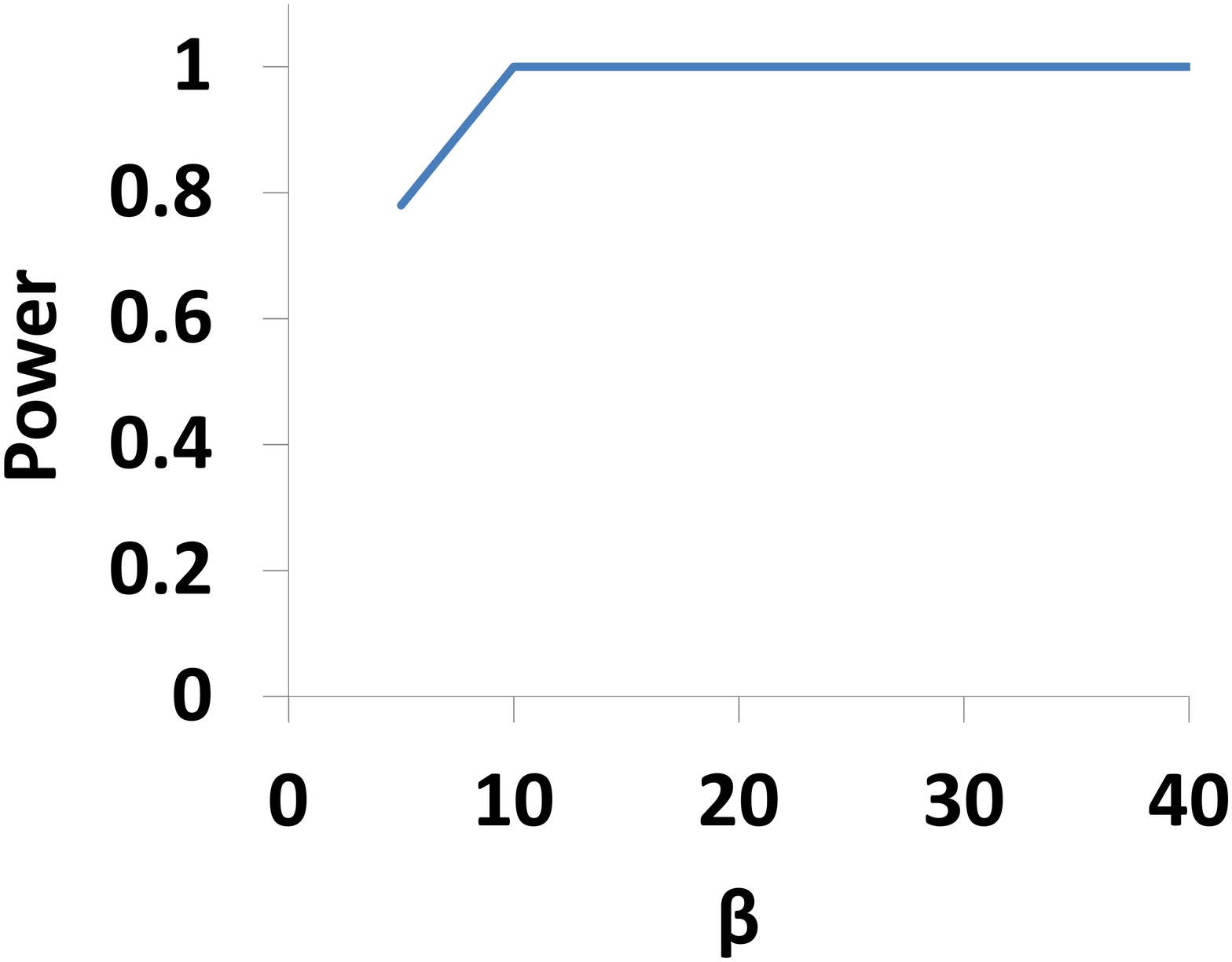}}\label{fig:f3_beta}}~
\subfigure[Power on $f_4$]
{{\includegraphics[width=0.23\textwidth]{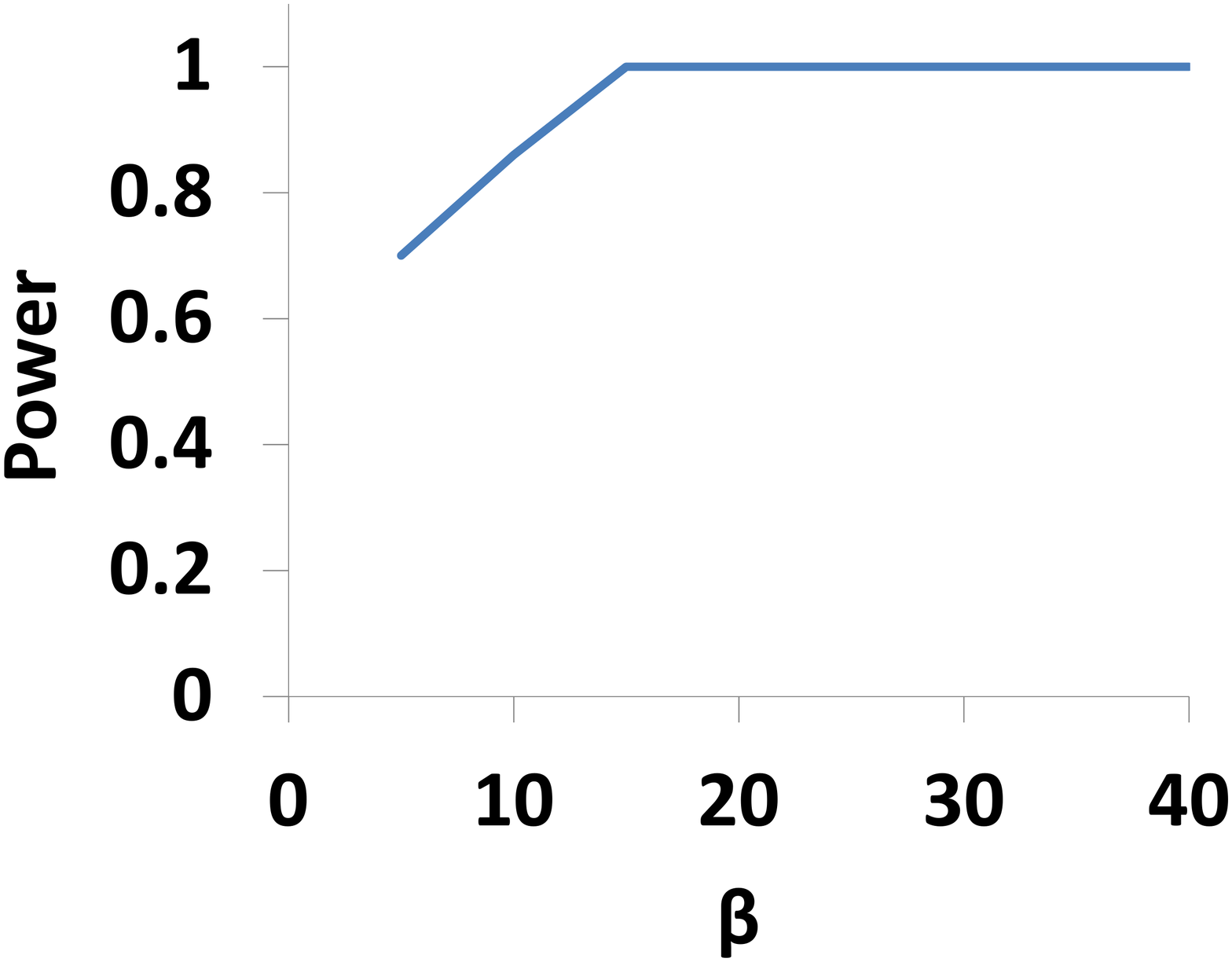}}\label{fig:f4_beta}}
\caption{[Higher is better] Sensitivity to $\beta$: Statistical power on synthetic data sets for the setting where $\size = 4000$ and $\dima = 20$.} \label{fig:beta}
\end{figure}

\section{Efficiency Results} \label{sec:efficiencyresults}

For efficiency to data size $\size$, we generate data sets with dimensionality $\dima = 20$ and $\size$ varied. For efficiency to dimensionality $\dima$, we generate data sets with size $\size = 4000$ and $\dima$ varied. The results are in Figure~\ref{fig:time}. We see that \uds scales much better than \mac and on par with other methods. \mac is inefficient because it has to compute all pairwise correlations of a subspace before outputting its final score.

\begin{figure}[tb]
\centering
\subfigure[Runtime vs.\ $\size$]
{{\includegraphics[width=0.23\textwidth]{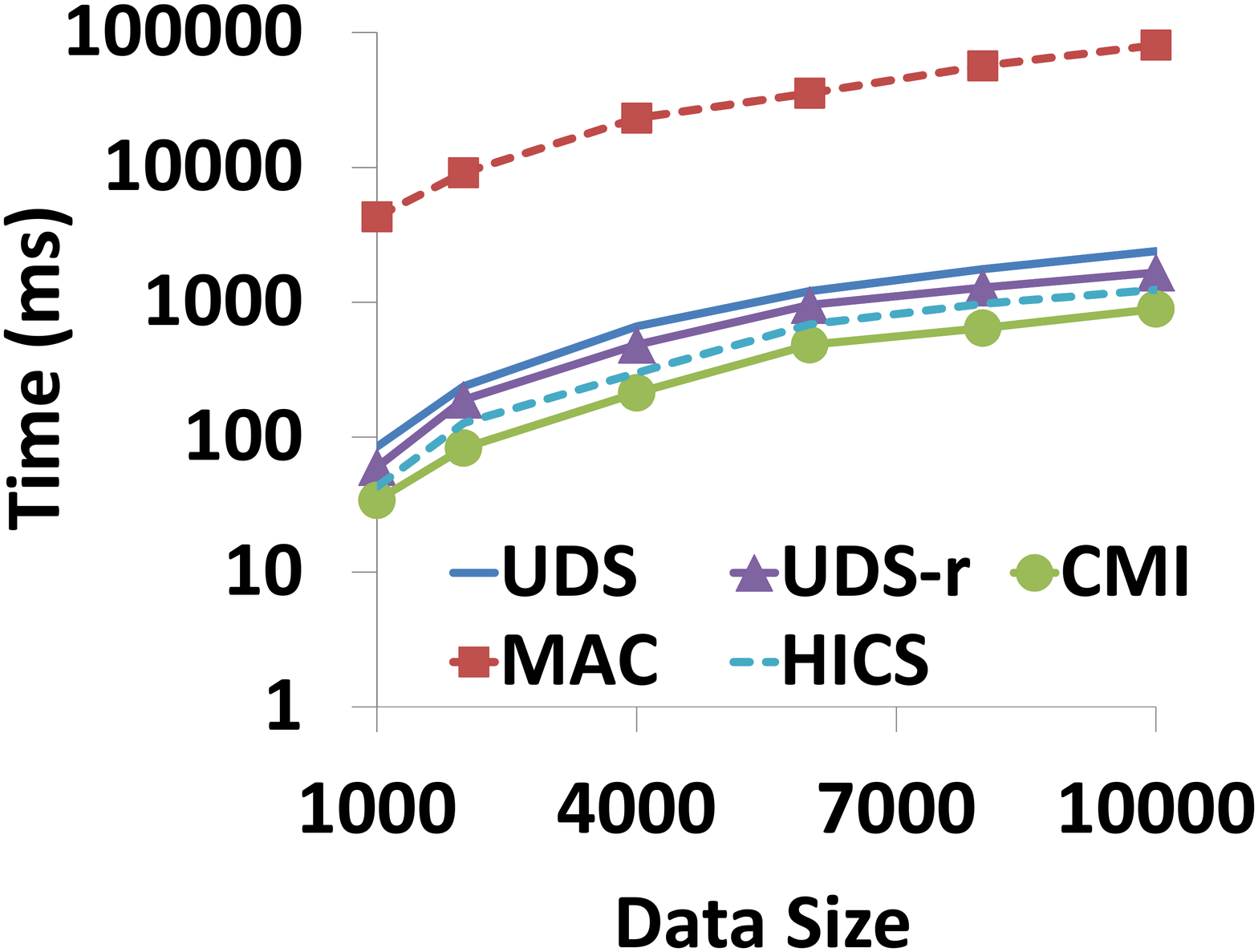}}\label{fig:time_vs_size}}~
\subfigure[Runtime vs.\ $\dima$]
{{\includegraphics[width=0.23\textwidth]{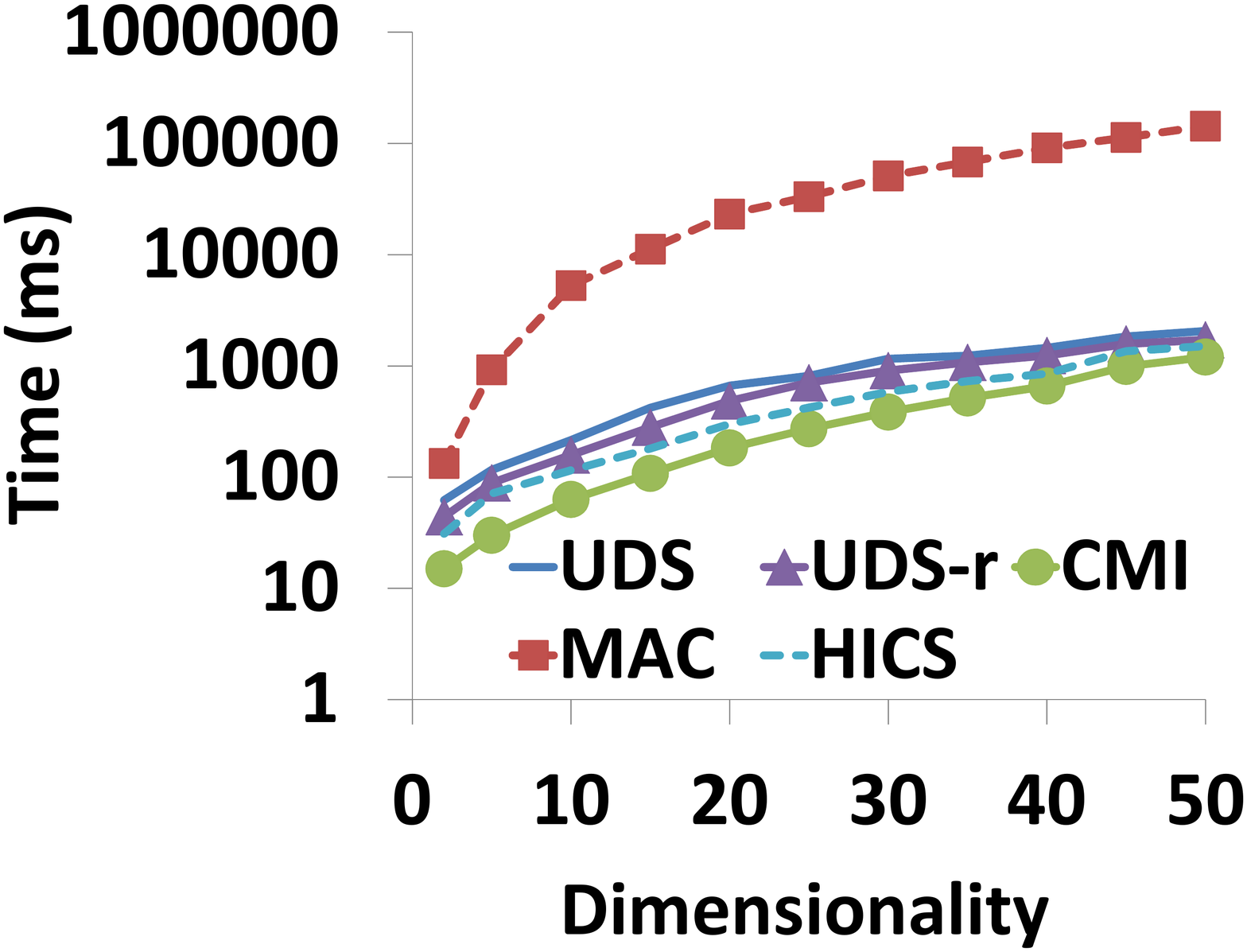}}\label{fig:time_vs_dim}}
\caption{[Lower is better] Scalability to data size $\size$ and dimensionality $\dima$. The default setting is $\size = 4000$ and $\dima = 20$. The runtime axis is in \textit{log scale}.} \label{fig:time}
\end{figure}

\section{Results on Outlier Detection} \label{sec:outlierresults}

Besides clustering, we also use outlier detection to test quality of correlated subspaces found by each method. Again, we plug all methods into beam search to find correlated subspaces. As common in subspace search~\cite{keller:hics}, for each method we apply \lof, a well-known technique for outlier detection, on top of its output subspaces. Outlier detection also tends to yield meaningful results on subspaces with high correlations~\cite{keller:hics,nguyen:cmi}.

To show that \uds can work with various data sets, we pick another 6 real labeled data sets -- also from UCI Repository -- for testing purposes. For each of these data sets, we follow standard procedure in the literature and create outliers by randomly taking 10\% of the smallest class. As performance metric, we use AUC (Area under ROC Curve). The results are in Table~\ref{tab:outlier}. We see that \uds consistently achieves the best AUC scores on all data sets. This implies that it finds better correlated subspaces that help \lof to more accurately identify true outliers.

\begin{table}[tb]
\centering 
\begin{tabular}{lrrrr}
\toprule
Data & {\bf \uds} & {\bf \cmi} & {\bf \mac} & {\bf \hics}\\
\otoprule

Ann-Thyroid & \textbf{0.98} & 0.96 & 0.96 & 0.95\\

Satimage & \textbf{0.98} & 0.74 & 0.95 & 0.86\\

Segmentation & \textbf{0.54} & 0.39 & 0.51 & 0.49\\

Wave Noise & \textbf{0.51} & 0.50 & 0.50 & 0.48\\

WBC & \textbf{0.50} & 0.47 & 0.48 & 0.47\\

WBCD & \textbf{0.99} & 0.92 & \textbf{0.99} & 0.91\\

\midrule
		
Average & \textbf{0.75} & 0.66 & 0.73 & 0.69\\

\bottomrule
\end{tabular}
\caption{[Higher is better] Outlier detection results (AUC scores) on real-world data sets.} \label{tab:outlier}
\end{table}

\fi

\end{document}